%% file: main.tex
\documentclass{article}
\usepackage[preprint]{neurips_2025}
\input{math_commands.tex}

\renewcommand{\eqref}[1]{(\ref{#1})}
\usepackage{graphicx}
\usepackage{booktabs}
\usepackage{longtable}
\usepackage{siunitx}
\usepackage{placeins}
\usepackage{needspace}
\usepackage{amsthm}
\usepackage{float}
\usepackage{algorithm}
\usepackage{algpseudocode}
\usepackage[dvipsnames]{xcolor}
\usepackage[pagebackref=true,colorlinks,citecolor=brown]{hyperref}
\usepackage{url}
\usepackage{fontawesome5}
\usepackage{enumitem}
\usepackage{caption}
\usepackage{titlesec}
\renewcommand{\paragraph}[1]{\vspace{.1em}\noindent\textbf{#1}}
\titlespacing\section{0pt}{3pt plus 1pt minus 1pt}{2pt plus 1pt minus 1pt}
\titlespacing\subsection{0pt}{2pt plus 1pt minus 1pt}{1pt plus 1pt minus 1pt}
\makeatletter
\renewcommand{\@noticestring}{}
\makeatother
\graphicspath{{figures/}{figures_v2/}}
\input{manuscript_fragments/main_facts.tex}


\input{facts_anchor_transport.tex}
\theoremstyle{plain}
\newtheorem{proposition}{Proposition}

\theoremstyle{definition}

\title{Aim Short to Reach Far: Your Frozen World Model\\ Can Plan Better Than You Think}
\author{%
\textbf{Xvyuan Liu}\textsuperscript{1,2,\textdagger},
\textbf{Jianjie Fang}\textsuperscript{1,\textdagger},
\textbf{Wei Wu}\textsuperscript{2},
\textbf{Chen Gao}\textsuperscript{1,*},
\textbf{Yong Li}\textsuperscript{1,*}\\[0.6em]
\textsuperscript{1}Tsinghua University\qquad\textsuperscript{2}Manifold AI\\[0.6em]
{\small \textsuperscript{\textdagger}Equal contribution}\\[0.2em]
{\small \textsuperscript{*}Corresponding authors: \texttt{chgao96@gmail.com, liyong07@tsinghua.edu.cn}}\\[0.2em]
{\small \href{https://daybraeklaxry.github.io/Aim-Short-to-Reach-Far/}{\textcolor{magenta}{\faGlobe\ \textit{Project Page}}}\qquad\href{https://github.com/daybraeklaxry/Aim-Short-to-Reach-Far}{\textcolor{black}{\faGithub\ \textit{Code}}}}}
\hypersetup{pdfauthor={Xvyuan Liu, Jianjie Fang, Wei Wu, Chen Gao, and Yong Li},pdftitle={Aim Short to Reach Far: Your Frozen World Model Can Plan Better Than You Think}}

\begin{document}
\raggedbottom
\widowpenalties 3 10000 10000 0
\clubpenalties 3 10000 10000 0
\maketitle
\begin{abstract}
\input{sections/00_abstract.tex}
\end{abstract}

\makeatletter
\edef\apSavedTopnumber{\the\c@topnumber}
\edef\apSavedTotalnumber{\the\c@totalnumber}
\let\apSavedTopfraction\topfraction
\let\apSavedTextfraction\textfraction
\let\apSavedFloatpagefraction\floatpagefraction
\makeatother
\setcounter{topnumber}{3}
\setcounter{totalnumber}{4}
\renewcommand{\topfraction}{0.9}
\renewcommand{\textfraction}{0.1}
\renewcommand{\floatpagefraction}{0.8}
\input{sections/01_introduction.tex}

\input{sections/03_preliminaries.tex}
\input{sections/04_method.tex}
\input{sections/05_experiments.tex}

\input{sections/02_related.tex}
\input{sections/06_conclusion.tex}
\setcounter{topnumber}{\apSavedTopnumber}
\setcounter{totalnumber}{\apSavedTotalnumber}
\let\topfraction\apSavedTopfraction
\let\textfraction\apSavedTextfraction
\let\floatpagefraction\apSavedFloatpagefraction

\Needspace{8\baselineskip}
\bibliography{refs}
\bibliographystyle{plainnat}

\clearpage
\appendix
The appendix follows the main questions: evaluation protocol and proofs first, then target comparisons and behavior (RQ1), search and goal distance (RQ2), and target construction and action selection (RQ3). The final section reports an independent-query replication. Code and per-episode results are available in our \href{https://github.com/daybraeklaxry/Aim-Short-to-Reach-Far}{repository}.

\input{sections/A0_fresh_protocol.tex}
\input{sections/A4_reproducibility.tex}
\input{sections/A1_theory.tex}

\FloatBarrier
\input{sections/C_rq1.tex}
\FloatBarrier
\input{sections/D_rq2.tex}

\input{sections/E_rq3.tex}

\FloatBarrier
\clearpage
\input{sections/F_independent.tex}
\end{document}

%% file: math_commands.tex
\usepackage{amsmath,amsfonts,bm}

\def\eqref#1{equation~\ref{#1}}

\def\1{\bm{1}}

\DeclareMathAlphabet{\mathsfit}{\encodingdefault}{\sfdefault}{m}{sl}
\SetMathAlphabet{\mathsfit}{bold}{\encodingdefault}{\sfdefault}{bx}{n}



%% file: manuscript_fragments/main_facts.tex
\newcommand{\VTwoMeanCemFinalStandard}{9.2}
\newcommand{\VTwoMeanCemLearnedStandard}{55.7}
\newcommand{\VTwoMeanCemObservedStandard}{59.8}

\newcommand{\VTwoMeanRankFinalStandard}{67.0}
\newcommand{\VTwoMeanRankLearnedStandard}{84.6}
\newcommand{\VTwoMeanRankObservedStandard}{83.8}

%% file: sections/00_abstract.tex
Latent world models plan toward goal images with a frozen pretrained predictor, without task rewards or extra trained heads. However, their planners struggle with long-range goals, and prior work addresses this by training extra components such as value functions or subgoal models. We show that the planning target itself can cause this failure: even with exact dynamics and globally optimal short-horizon search, scoring predictions by their distance to the final goal rejects the first steps of a route that initially moves away from the goal. Building on this insight, we propose \emph{Anchored Planning (AP)}, a training-free method that reuses the world model's own offline trajectories. AP retrieves a segment that leads from the current observation toward the goal and aims the frozen planner at an observation shortly after the segment's start. Across four diverse tasks, AP substantially improves frozen LeWM planners for both action synthesis and action ranking, and it outperforms both additional final-goal search and the LeWM planner on long-range goals.

%% file: sections/01_introduction.tex
\suppressfloats[t]
\input{manuscript_fragments/figure1_wrapper.tex}

\section{Introduction}
\label{sec:introduction}

\input{sections/01_intro_replacement.tex}

%% file: manuscript_fragments/figure1_wrapper.tex
\begin{figure}[t]
\centering
\includegraphics[width=\linewidth]{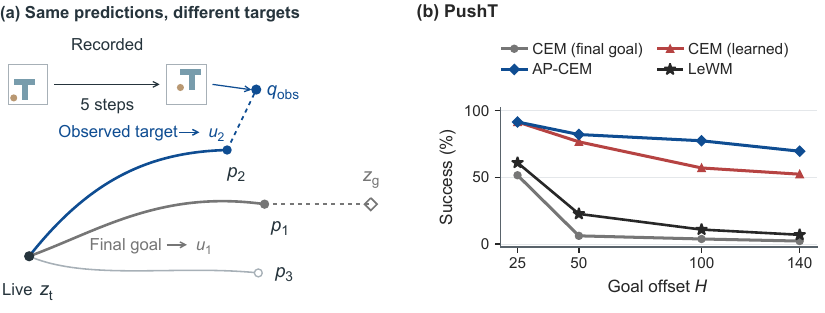}
\caption{\textbf{Changing the target improves control with the same frozen predictor.}
(a) Both target rules score the same endpoints $p_i=F(z_t,u_i)$.
Final-goal scoring selects $u_1$, whereas observed-target scoring selects $u_2$.
Solid curves show predictions, and dashed segments show scoring distances.
The target inset shows a recorded successor. Positions and observations are schematic.
(b) On PushT, intermediate targets sustain control as the recorded goal offset grows.
Curves use paired standard-start queries and the same frozen LeWM weights.}
\label{fig:capability}
\end{figure}

%% file: sections/01_intro_replacement.tex
Recently, latent world models have made it possible to plan control directly from pixels~\citep{zhou2024dinowm,assran2025vjepa2,maes2026lewm}. They encode the current and goal images and predict how the embeddings change under candidate actions. A planner then executes the short action block whose predicted embedding lies closest to the encoded goal and replans~\citep{ebert2018visual,hafner2019planet}. This recipe needs nothing beyond a goal image, and it has become a common way to turn a pretrained world model into a goal-conditioned controller~\citep{zhou2024dinowm,sobal2025pldm,terver2025jepawm,maes2026lewm}.

However, goal-distance planning breaks down as goals move farther away. \citet{caselli2026hilewm} report the decline for the LeWM planner, and final-goal CEM exhibits it on every task we study. To extend the planner's reach, prior work equips it with learned additions, for instance terminal values that lengthen a short search~\citep{hansen2022tdmpc,hansen2024tdmpc2} or high-level models that propose subgoals~\citep{zhang2026hwm,cheng2026sage}. Yet these additions change what the planner computes, so they cannot reveal whether the frozen model is too weak for distant goals or whether its planner is simply aiming at the wrong place.

In this paper, we demonstrate that the limitation can lie in where the planner aims rather than in what the model predicts. Concretely, the planner scores each prediction against a \emph{planning target}---by default the encoded goal---which objective-driven architectures treat as a cost separate from the world model~\citep{lecun2022path}. We confirm the effect of this target with a concrete control problem: the planner never moves even though every prediction is exact and each short search returns its global optimum, whereas a sequence of nearby waypoints along a feasible route carries it to the goal. The failure arises because reaching a distant goal may require first moving away from it~\citep{koren1991potential,barraquand1991rpp}. A short search scored against the final goal sees only these first steps and therefore tends to reject them, much as potential-field descent stalls at nonglobal minima~\citep{khatib1986potential,rimon1992exact}. Since the rejection comes from the score rather than from the predictions, a better model cannot remove it.

This finding motivates \emph{Anchored Planning (AP)}, which leaves the frozen model and planner untouched and replaces the final goal with a waypoint drawn from experience. At each decision, AP searches the recorded trajectories that come with the world model for a segment that begins like the current image and ends like the goal, and takes a frame near the beginning of that segment as the target. The frozen model then predicts action blocks from the current state and scores them against this waypoint, so the recorded route decides where to aim while live prediction decides how to get there. One waypoint serves two action rules: AP-CEM synthesizes actions from observation-only memory, and AP-rank ranks recorded action blocks from action-labeled memory. Because every waypoint comes from data the world model already came with, AP trains nothing new, and each evaluation episode is withheld from its memory.

With frozen LeWM models on Cube, PushT, Reacher, and TwoRoom, this change lifts success for both action rules, and the gains widen with goal distance, precisely where final-goal planning falters. Action synthesis improves even on nearby goals and further on distant ones, whereas ranking, which is near its ceiling on nearby goals, gains once the goal recedes. The recovered behavior follows the mechanism above. Across all four tasks, most successful AP-CEM rollouts take a physical detour, the kind of move that final-goal scoring penalizes, and on the two manipulation tasks most final-goal failures stall in place. Extra search is no substitute: a few CEM iterations toward an intermediate target beat many more toward the final goal, and AP surpasses LeWM's own planner at long goal offsets. Observed targets also match or exceed, in mean success, a learned target that predicts recorded successors more accurately.

Our contributions are summarized as follows:
\begingroup
\renewcommand{\labelenumi}{\textbf{\roman{enumi})}}
\setlength{\leftmargini}{1.8em}
\begin{enumerate}
    \item We identify the choice of planning target as a bottleneck of frozen world-model planning, and prove that aiming at the final goal can halt a planner whose predictions and short-horizon search are both perfect.
    \item We develop \textbf{\emph{Anchored Planning (AP)}}, which needs no training and draws a nearby target from a retrieved recorded route and uses it both to synthesize actions and to rank recorded ones with a frozen predictor.
    \item Our experiments show that retargeting lifts frozen LeWM planners beyond what a larger search budget achieves, and that observed targets rival or surpass a trained target predictor in mean success at no training cost.
\end{enumerate}
\endgroup
Our findings indicate that frozen world models hold far more planning capability than final-goal scoring allows them to express, and that aiming short is what releases it.

%% file: sections/03_preliminaries.tex
\section{Why the Planning Target Matters}
\label{sec:preliminaries}
\label{sec:half-gap}

A frozen encoder $E$ maps current and goal images to $z_t=E(o_t)$ and $z_g=E(o_g)$. An action block $u$ contains $L=5$ successive primitive actions. The frozen predictor $F(z_t,u)$ estimates the encoded state after that block and includes the fixed action normalization. The planner seeks the block whose predicted endpoint is closest to a target $q$:
\begin{equation}
\label{eq:scoring}
\hat u(q)\in\arg\min_{u\in\mathcal C}J_q(u),
\qquad J_q(u)=\|F(z_t,u)-q\|_2^2.
\end{equation}
For a finite candidate set, we score every block. For continuous actions, we approximate the optimization with the cross-entropy method (CEM)~\citep{deboer2005cem}. The controller executes the selected block and observes again. Physical success is always measured against the final goal, and $q$ only decides which block to execute next.

For two fixed predictions $p_u=F(z_t,u)$ and $p_v=F(z_t,v)$, let $\Delta_q(u,v)=J_q(u)-J_q(v)$. A negative value favors $u$ over $v$. Changing the target from $q$ to $q'$ gives
\begin{equation}
\label{eq:target-order}
\Delta_{q'}(u,v)-\Delta_q(u,v)=2(p_u-p_v)^\top(q-q').
\end{equation}
Thus the same predicted endpoints can favor different actions when the target changes, as Figure~\ref{fig:capability}a illustrates. The following construction shows how this changes control even when the short-horizon search finds the global minimum.

\begin{samepage}
\begin{proposition}[Goal-distance control can stall]
\label{prop:exact-target}
Even with smooth dynamics, bounded continuous actions, and exact predictions, globally optimal five-action final-goal scoring can select the zero block forever from a continuum of goal-reachable states. Under identical dynamics, actions, and execution, six successive endpoints of one feasible recorded trajectory instead lead every such start to the goal.
\end{proposition}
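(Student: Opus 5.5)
The proof is by explicit construction. We choose dynamics whose goal sits behind a wall-like barrier, so that every feasible route first moves away from the goal in the scored metric. Concretely, take the encoder to be the identity on a planar state $x=(x_1,x_2)\in\mathbb R^2$. The goal is $z_g=0$. Use a smooth vector field of the form $x_{k+1}=x_k+a_k\,\phi(x_k)$ with actions $a_k\in[-1,1]^2$. The gating $\phi$ makes motion toward the goal blocked near a barrier segment: $\phi$ is a smooth factor that kills the component of motion crossing the line $\{x_1=1,\ |x_2|<c\}$. Motion is unrestricted away from that segment. The key observation is that the predictor is exact, $F(z_t,u)$ equals the true five-step endpoint, so the stalling cannot be blamed on model error.

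The stall claim is a one-step optimality argument over the continuum of starts $S=\{(x_1,0): 1<x_1<1+\epsilon\}$, the points just behind the barrier on the axis. For such a start, the blocked $x_1$-component means any five-action block can only move the state laterally (changing $x_2$) or away from the goal (increasing $x_1$), up to a smooth error. Both kinds of move increase $\|x\|^2$. Hence the zero block, whose endpoint is the start itself, is the global minimizer of $J_{z_g}$. I would make it the unique minimizer so that any tie-breaking rule selects it. The state never changes, so the selection repeats forever. Two things must be checked at this step:
\begin{itemize}
\item the blocking is strong enough that over five steps of size at most $1$, no combination of lateral and forward moves yields a net decrease of $\|x\|$;
\item the smooth cutoff in $\phi$ does not create a leak.
\end{itemize}
I expect this inequality to be the main obstacle. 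The cleanest fix is to take the barrier half-width $c$ larger than the reach of one five-step block (for example $c>5$, or scale actions to $[-\delta,\delta]$). Then within one block the state cannot escape the region where forward motion is exactly zero. Reachability of the goal is then automatic: a longer route goes out laterally beyond $|x_2|>c$, crosses, and comes back.

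For the positive claim, I would fix one recorded feasible trajectory that does exactly this detour: lateral to $(x_1,\pm(c+1))$, forward past the barrier, then back to $0$. I would choose it so that its six marked endpoints $w_1,\dots,w_6=z_g$ are spaced so that each consecutive one is reachable in a single five-action block from any point the previous stage lands on. The argument proceeds by induction. From a start in $S$, the target $w_1$ is reachable exactly, for instance at $(x_1^{\ast},c+1)$, after adjusting so that one block suffices from every start in $S$; this may need the starts restricted to a short segment or $w_1$ placed generously. Exact prediction plus global optimality then forces the selected endpoint to equal $w_1$, since the minimum of $J_{w_1}$ is zero and is attained only at $w_1$. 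By the same reasoning the planner reaches $w_2,\dots,w_6$ in turn. The dynamics, action bounds, and execution are the same as in the stalling case; only $q$ changes.

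The final bookkeeping verifies three things: the lateral leg fits the reach of one block per waypoint, so six waypoints suffice with the chosen scalings; the region in which $\phi$ blocks forward motion is not entered along the recorded route except where it moves laterally; and the dynamics are smooth everywhere.
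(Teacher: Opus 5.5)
Your stall argument is sound once the gate is exact. The positive half, however, fails at its base case, and the cause is the choice $c>5$ you made to secure the stall. In $x_{k+1}=x_k+\phi(x_k)a_k$ with the symmetric action set $[-1,1]^2$, the achievable displacements at each state are symmetric under negation. The gate therefore cannot block goalward motion while still letting $x_1$ increase, as your description of the stall assumes: wherever it stops $x_1$ from decreasing, it also stops it from increasing. With $c>5$, every state visited during one block from $(x_1,0)\in S$ lies in the frozen strip, so that block's reachable set is the segment $\{x_1\}\times[-5,5]$. These segments are pairwise disjoint for distinct starts, so no recorded endpoint $w_1$ is exactly attainable from two points of $S$. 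Your step asserting that $J_{w_1}$ has minimum zero, attained only at $w_1$, thus fails for all but at most one start. Neither hedge rescues it: shortening $S$ still leaves disjoint reachable sets, and no placement of $w_1$ lies on two of them. (Separately, the zero block cannot be the unique minimizer, since a block that moves out laterally and back ties with it. This is harmless because all minimizers return the same state.)

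The paper avoids this with one-sided motion along a curve. States lie on $c(s)=(s,s^2)$, each primitive advances $s$ by $a\in[0,1/8]$, and $D(s)=\|c(s)-c(2)\|^2$ increases on $[-3/2,-3/8]$. From any $s_0\in[-3/2,-1]$, every nonzero block therefore increases $D$. Yet all the one-block arcs $c([s_0,s_0+5/8])$ contain the first recorded endpoint $c(-7/8)$.

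You can repair your barrier in two ways. One is to make the admissible motion one-sided near $S$ in the same way. The other is to drop $c>5$ and let the first block exit the strip and correct $x_1$ on its last step. For instance, take $\phi=\mathrm{diag}(\psi,1)$, with $\psi=0$ on $[1,2]\times[-3.5,3.5]$ and $\psi=1$ whenever $|x_2|\ge 3.9$ or $x_1$ lies outside a small neighborhood of $[1,2]$. Then $w_1=(1,5)$ is exactly reachable from every start in $S$, and the goal is one further block from $w_1$. The stall survives: any block that changes $x_1$ must first bring $|x_2|$ above $3.5$, so it ends with $|x_2|>2.5$ and costs more than staying put. This is exactly the recheck of your first bullet that the choice $c>5$ was meant to avoid, and you would need to carry it out for blocks that leave the strip.
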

\end{samepage}

The construction places states on the curve $c(s)=(s,s^2)$, with goal $c(2)$. Each primitive action increases $s$ by an amount in $[0,1/8]$, so exact predictions follow the same continuous route toward the goal. Yet squared final-goal distance along that route is
\begin{equation}
D(s)=\|c(s)-c(2)\|^2=(s-2)^2+(s^2-4)^2.
\label{eq:curve-goal-distance}
\end{equation}
For starts $s\in[-3/2,-1]$, every nonzero admissible block ends farther from the goal under $D$, whereas the zero block leaves the distance unchanged. The global optimum of the local objective is therefore to stay still. The predictor correctly places every nonzero block farther along the route, but the distance score favors staying still.

Replacing the final goal with successive targets on the recorded curve changes this choice. Each target can be attained within the local action horizon in this construction, and selecting it advances along the same dynamics until the goal is reached. Appendix~\ref{app:theory} gives the complete construction and proof. Anchored Planning applies this idea by retrieving a task-relevant trajectory and predicting actions toward an early observed successor. Section~\ref{sec:experiments} measures the resulting target effect with frozen visual predictors.

%% file: sections/04_method.tex
\begin{figure}[t]
\centering
\includegraphics[width=\linewidth]{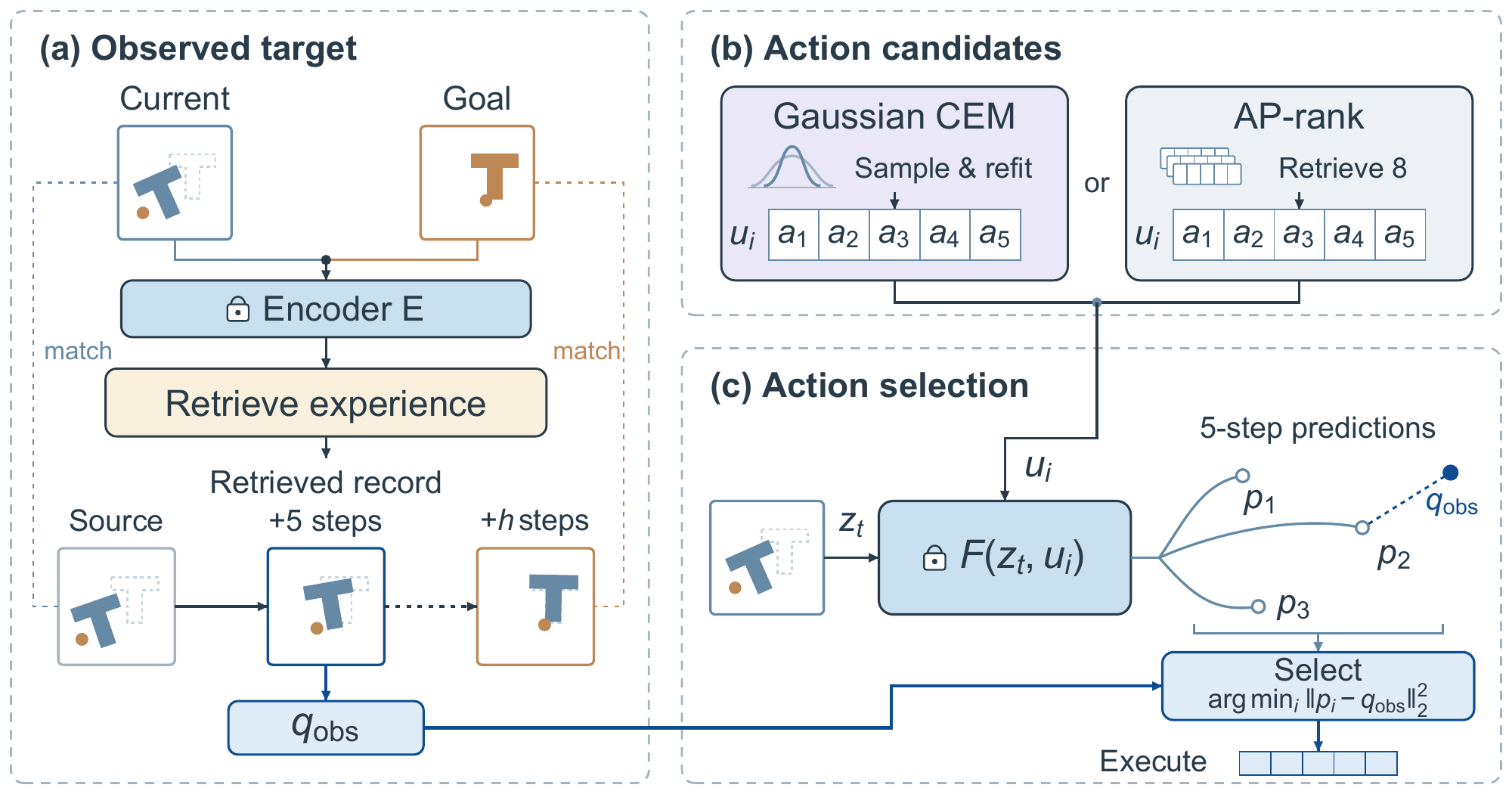}
\caption{\textbf{Anchored Planning: observed targets, live predictions.} (a) We match the current and goal images to the source and distant images of each record. The closest record's five-step successor supplies $q_{\rm obs}$. The distant offset is $h=\max(5,H-t)$, for goal offset $H$ and $t$ executed actions. (b) Each strip represents a candidate block $u_i$. Gaussian CEM synthesizes these blocks, whereas AP-rank retrieves eight recorded blocks. (c) The frozen model predicts endpoints $p_i=F(z_t,u_i)$. The dashed segment shows distance to $q_{\rm obs}$. We select the block whose endpoint is closest to the target, execute up to five actions, and observe again before replanning. All glyphs and positions are schematic.}
\label{fig:architecture}
\end{figure}

\section{Anchored Planning}
\label{sec:anchoring}

Figure~\ref{fig:architecture} shows how Anchored Planning separates choosing a destination from choosing an action. We take an observed target, a waypoint on a recorded route toward the goal, from experience and use the frozen model to predict how candidate actions approach it from the current state. Both memory settings below are built from the offline trajectories that accompany the world model, and no evaluation episode is ever stored in them. \emph{Observation-only memory} contains ordered images with episode and time indices but no actions, so the planner must synthesize them. \emph{Action-labeled memory} also contains the recorded actions, which the planner can rank. Both use the same frozen predictor, target rule, and fixed memory.

\subsection{Use distant experience to choose an observed target}
\label{sec:support-bank}

A short search needs a nearby destination that belongs to a route toward the final goal. We therefore match a record through its source and distant observations, then use its early successor to score actions. Each record contains a source, an $L$-step successor, and an $h$-step endpoint. We use the distant endpoint to match the longer route to the goal, while predicting only the next action block.

Each query provides the recorded action offset $H$ between its starting image and goal image. After $t$ executed primitive actions, we retrieve records spanning
\begin{equation}
\label{eq:remaining}
h(t)=\max\{L,H-t\}.
\end{equation}
As execution advances, retrieval matches progressively shorter recorded spans to the final goal, down to $L$. The observed target and prediction horizon remain $L$ steps ahead of their respective source states.

Let $s$ index a recorded start. For retrieval horizon $h$, memory $\mathcal B_h$ contains starts for which both $s+L$ and $s+h$ remain within the same episode. Each record pairs the retrieval key
\[
b_s^{(h)}=(z_s,\ z_{s+h},\ z_{s+h}-z_s)
\]
with its successor encoding $z_{s+L}$. Retrieval uses ordered observations and episode indices. For AP-rank, a record additionally provides its next-five-action block $u_s=(a_s,\ldots,a_{s+L-1})$. The current query is $b_t=(z_t,z_g,z_g-z_t)$. Retrieval ranks records by standardized squared distance,
\begin{equation}
\label{eq:retrieval}
d_h(t,s)=\left\|\frac{b_t-b_s^{(h)}}{\sigma_h}\right\|_2^2.
\end{equation}
Here $\sigma_h$ is the coordinate-wise population standard deviation over eligible records. The closest start $s_1$ supplies the target
\begin{equation}
\label{eq:observed-target}
q_{\rm obs}=z_{s_1+L}=E(o_{s_1+L}).
\end{equation}

\subsection{Synthesize actions from observation-only memory}
\label{sec:observation-search}

AP-CEM maintains a Gaussian distribution over five-action blocks. From a fixed prior, it clips proposals to the action bounds and scores their predicted endpoints against $q_{\rm obs}$. The best-scoring proposals update the distribution. We compare the clipped means before and after optimization and execute the lower-cost block. Appendix~\ref{app:fresh-protocol} gives the complete update and action mapping.

\subsection{Rank retrieved actions from the current observation}
\label{sec:selection}
\label{sec:computation}

AP-rank takes one action block from each of the eight retrieved records and predicts its endpoint from the current observation. We score all eight endpoints against the same observed target, taken from the closest record. Let $P$ clip each action coordinate to its task-specific bounds. The clipped candidate set is $\mathcal C=\{P(u_{s_1}),\ldots,P(u_{s_8})\}$, and AP-rank selects
\begin{equation}
\label{eq:anchored}
\hat u=\arg\min_{u\in\mathcal C}\|F(z_t,u)-q_{\rm obs}\|_2^2,
\end{equation}
with ties resolved by retrieval order.  After each primitive action, we check for success, termination, and the end of the action allowance. We observe and replan after at most five actions, and Algorithm~\ref{alg:anchored} in Appendix~\ref{app:reproducibility} summarizes the full loop.

\subsection{Compared targets and baselines}
\label{sec:controls}

We compare final-goal, learned, and observed targets under each action rule. Final-goal scoring sets $q=z_g$. A learned-target MLP predicts the observed successor from the current and goal encodings, their difference, and the retrieval span. It is trained on the same memory observations as AP, and its configuration is selected on held-out memory episodes, as described in Appendix~\ref{app:v2-learned-target}.

For CEM, only the target changes: it determines which predicted outcomes refine the Gaussian, while initialization, seeds, search settings, and action mapping stay fixed. Ranking scores the same eight retrieved blocks from a shared live state. Each controller then repeats its target rule and action selection along the resulting rollout.

\emph{Direct} executes the closest record's clipped action block, then observes and retrieves again. Comparing Direct with predictive ranking measures the benefit of evaluating recorded actions from the current state.

%% file: sections/05_experiments.tex
\section{Experiments}
\label{sec:experiments}

We organize the experiments around three questions about the effect of target choice, the role of search, and the design of useful targets:\\
\textbf{RQ1:} How much control does changing the target reveal?\\
\textbf{RQ2:} Can more search or longer lookahead recover the same gains?\\
\textbf{RQ3:} What makes a target useful, and how does prediction help reach it?

\subsection{Evaluation setup}
\label{sec:setup}

\paragraph{Models, memory, and paired queries.}
We evaluate Cube, PushT, Reacher, and TwoRoom with task-specific pretrained LeWM models~\citep{maes2026lewm}. Encoders, predictors, and the original action normalizers remain frozen. Memory contains the supplied training trajectories. Each task has 128 query episodes. We exclude them from memory and never draw perturbation prefixes from them. Each query pairs a start image with the image $H$ actions later in the same episode. We call these the original offsets, which span 100--150 actions; Table~\ref{tab:setup} lists them per task. All experiments below use these paired queries. Appendix~\ref{app:independent} repeats the target comparison on independent queries.

\paragraph{Starts and success.}
Recorded actions can have different effects from a displaced state. We therefore evaluate each recorded (standard) start and two perturbed starts reached by executing preassigned five-action prefixes. The goal, memory, and subsequent allowance stay fixed. Controllers receive images, $H$, and the allowance. We use physical states to restore paired starts and measure success after every primitive action. For perturbed starts, we average the two starts of each query, and task means weight all tasks equally. Appendix~\ref{app:fresh-protocol} gives the evaluation details.

\paragraph{Action rules and targets.}
Single-block CEM runs 30 iterations with 300 candidates and 30 elites, and ranking chooses among eight retrieved blocks. Both use the final, learned, or observed targets defined in Section~\ref{sec:controls}. They replan after at most five actions and clip proposals to the action bounds before prediction and execution. The LeWM planner keeps its action mapping and plans and executes 25 actions at a time. On shared queries, our implementation and the LeWM evaluator agree on nearly all episode outcomes, as Appendix~\ref{app:lewm-audit} shows.

\input{manuscript_fragments/main_table_wrapper.tex}

\subsection{RQ1: How much control does changing the target reveal?}
\label{sec:gaussian-target-effect}

\paragraph{Both intermediate targets improve control.}
Table~\ref{tab:main-results} shows that changing the target while retaining the frozen predictor and action rule improves success on every task at standard and perturbed starts. At standard starts, CEM's mean rises from \VTwoMeanCemFinalStandard\% with the final goal to \VTwoMeanCemLearnedStandard\% with a learned target and \VTwoMeanCemObservedStandard\% with an observed target. Ranking rises from \VTwoMeanRankFinalStandard\% to \VTwoMeanRankLearnedStandard\% and \VTwoMeanRankObservedStandard\%, respectively. Thus both learned and observed targets let the same frozen model and action rule succeed where final-goal scoring fails. AP achieves these gains with targets drawn directly from recorded experience.

\paragraph{Target choice matters more at longer offsets.}
Figure~\ref{fig:v2-horizon} compares goal offsets while keeping starts and lookahead fixed. Final-goal CEM declines from the shortest to the original offset on all tasks. \citet{caselli2026hilewm} observe the same decline for the non-hierarchical LeWM planner on PushT, with success rates of 94.0\%, 52.7\%, and 18.0\% at offsets 25, 50, and 75 under their protocol. At standard starts, as the goal offset grows from $H=25$ to $H=100$, the mean gain from observed targets rises from 33.4 to 47.9 percentage points for CEM and from 1.2 to 15.4 points for ranking. Table~\ref{tab:horizon-full} shows the same trend at perturbed starts. When the goal is nearby, ranking with the final goal already works well, and intermediate targets matter mainly once the goal is farther away.

\begin{figure}[t]
\centering
\includegraphics[width=\linewidth]{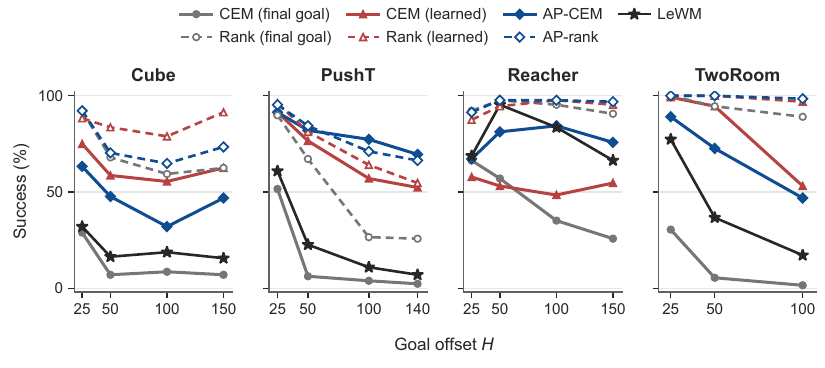}
\caption{\textbf{Observed targets give larger mean gains beyond the shortest offset.} We keep the standard starts fixed as $H$ changes and adjust the action allowance to preserve each task's $B/H$. Planner lookahead stays fixed. Results at the original offsets are those in Table~\ref{tab:main-results}. Table~\ref{tab:horizon-full} gives perturbed results.}
\label{fig:v2-horizon}
\end{figure}

\paragraph{The recovered control includes physical detours.}
We normalize physical error by the task's success thresholds. A success counts as a detour when this error rises above a previous minimum by a fixed margin, and a failure counts as stalling when there is little task-relevant motion near the end of the rollout. Successful actions can initially increase physical error, as the construction in Section~\ref{sec:preliminaries} illustrates. Figure~\ref{fig:v2-mechanism} shows this pattern. On PushT at standard starts, all 89 AP-CEM successes take a detour, whereas 100 of 125 final-goal CEM failures stall. Table~\ref{tab:v2-mechanism} confirms that most AP-CEM successes take a detour on every task, and that most final-goal CEM failures stall on the two manipulation tasks, Cube and PushT.

\begin{figure}[t]
\centering
\includegraphics[width=\linewidth]{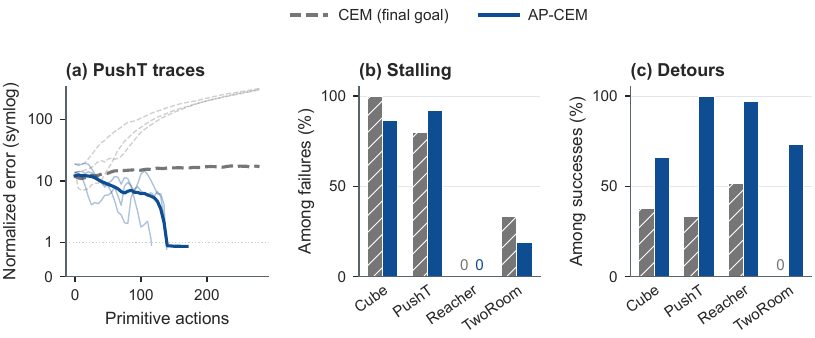}
\caption{\textbf{Successful AP-CEM control takes detours, while final-goal failures often stall.} (a) We compare final-goal CEM failures with AP-CEM successes at standard starts. Thin curves show the first three eligible queries, and thick curves show group medians with terminal error carried forward. The axis is linear below one and logarithmic above. (b) Bars show the fraction of failures that stall. (c) Bars show the fraction of successes that take a detour. Both rates use rollouts that enter control. Appendix~\ref{app:v2-physical-behavior} gives counts and threshold sensitivity.}
\label{fig:v2-mechanism}
\end{figure}

\subsection{RQ2: Can more search or longer lookahead recover the same gains?}
\label{sec:long-comparison}

\textbf{Target changes outperform more final-goal search.} RQ1 kept the search budget fixed. Here we ask whether more search or a longer lookahead can close the same gap. Figure~\ref{fig:search-budget} shows that two CEM iterations with either intermediate target already outperform 30 iterations with the final goal, on every task and at both kinds of start.

\textbf{Short observed-target search exceeds LeWM at the original offsets.} AP-CEM outperforms LeWM on every task and at both kinds of start. On Cube, PushT, and TwoRoom, it does so while predicting only 63.4--85.5\% as many action blocks per episode as LeWM. On Reacher, about 1.27 times as many blocks raise standard-start success by 9.4 points. Figure~\ref{fig:v2-horizon} shows that LeWM's 25-action plans are most competitive at shorter Reacher offsets. Once LeWM replans every five actions, as AP-CEM does, AP-CEM outperforms it on every task and at both kinds of start, as Table~\ref{tab:lewm-replan5} reports.

\begin{figure}[t]
\centering
\includegraphics[width=\linewidth]{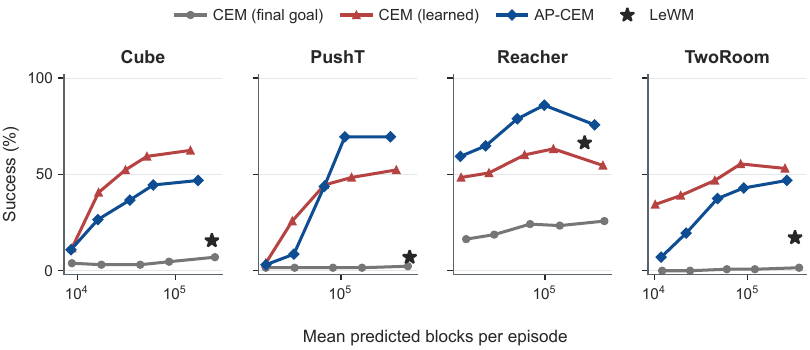}
\caption{\textbf{Two CEM iterations with an intermediate target outperform 30 with the final goal on every task.} Curves show standard-start success at iteration counts $1,2,5,10,30$, and stars mark LeWM. We measure work by the mean number of five-action blocks predicted per episode, counted until the episode ends. Table~\ref{tab:budget-full} shows the same comparison at perturbed starts.}
\label{fig:search-budget}
\end{figure}

\subsection{RQ3: What makes a target useful, and how does prediction help reach it?}
\label{sec:local-target}

\paragraph{Successor accuracy and control favor different target choices.}
The learned target is trained to match a recorded successor, whereas control is evaluated by reaching the final goal, so the two objectives can favor different targets. Table~\ref{tab:target-quality-control} shows that the learned target predicts held-out successors with lower mean error on every task, yet observed targets achieve higher mean CEM success without any training. Their largest margins arise on PushT and Reacher, where success rises from 52.3\% to 69.5\% and from 54.7\% to 75.8\%. Successor accuracy therefore does not determine how well a target guides control: an observed successor lies on a recorded route whose endpoint resembles the goal.

\input{manuscript_fragments/target_quality_control_wrapper.tex}

\paragraph{Nearby targets and retrieval timing shape success.}
Table~\ref{tab:v2-ablation} compares target offsets while keeping predictions at five actions. A three-step target raises AP-CEM's mean success to 69.9\%, while AP-rank performs best at the default five-step offset among those tested. One- and ten-step targets give lower means than the default for both rules. Ranking depends on shrinking the retrieval span as execution advances. With the span fixed at $H$, standard-start success falls from 83.8\% to 9.6\%. With the adaptive span, using ten percent of memory preserves most of the success rate. The retrieval key itself is robust: removing either its distant endpoint or its displacement term changes AP-rank's mean by at most 1.9 points, so the adaptive span, rather than fine tuning of the key, drives ranking success. Table~\ref{tab:v2-ablation}b reports these variants.

\input{manuscript_fragments/ablation_table_wrapper.tex}

\paragraph{An observed destination adjusts the requested motion.}
For recorded displacement $d=z_{s+L}-z_s$ and live offset $e=z_t-z_s$, the transported target $z_t+d$ requests the full recorded motion, whereas the observed target requests $d-e$. Anchoring the target to the observed endpoint raises mean CEM success from 53.7\% to 59.8\% at standard starts and from 46.0\% to 55.1\% after perturbation. Table~\ref{tab:v2-transport} breaks this down by task, with anchoring ahead on Cube, PushT, and TwoRoom.

\paragraph{Live prediction improves the choice of action.}
\label{sec:capability}
\label{sec:mechanism-work}
To assess how actions approach the target, we execute all eight candidates in the simulator from the same start and compare the resulting endpoints with estimates from displacement transfer and the frozen predictor, respectively:
\begin{equation}
\widetilde p_j=z_t+(z_{s_j+L}-z_{s_j}),\qquad p_j=F(z_t,u_j).
\label{eq:prediction-versus-transfer}
\end{equation}
Table~\ref{tab:confirmation-diagnostics} shows that the predictor reduces both endpoint error and selection regret relative to displacement in every task and start setting. It also reduces regret relative to Direct. We measure error as squared distance to the simulated endpoint and regret as the chosen action's exact observed-target cost above the best candidate's cost. Appendix~\ref{app:v2-exact-endpoints} gives first-block interventions followed by Direct in every rollout.

In closed-loop control, AP-rank gains most over Direct on PushT, raising success from 51.6\% to 66.4\% at standard starts and from 18.0\% to 47.7\% after perturbation. Experience specifies where to aim, and live prediction determines which recorded action approaches it from the current state.

\input{manuscript_fragments/diagnostic_table_wrapper.tex}

%% file: manuscript_fragments/main_table_wrapper.tex
\begin{table}[t]
\centering
\caption{\textbf{Intermediate targets improve control with the same frozen model.}
We report success (\%) on 128 paired queries per task. Column groups indicate
what experience each controller can use. Learned targets are trained on memory
observations. For perturbed starts, we average within each query, and task means
weight all tasks equally. Bold marks the highest success within each group of
controllers. AP-CEM and AP-rank use observed targets.}
\label{tab:main-results}
\input{tables_v2/main.tex}
\end{table}

%% file: tables_v2/main.tex
\begingroup
\small
\setlength{\tabcolsep}{2pt}
\begin{tabular*}{\linewidth}{@{\extracolsep{\fill}}llrrrrrrrr@{}}
\toprule
 &  & \multicolumn{1}{c}{No memory} & \multicolumn{3}{c}{Observation-only} & \multicolumn{4}{c}{Recorded actions} \\
\cmidrule(lr){3-3}\cmidrule(lr){4-6}\cmidrule(l){7-10}
Task & Start & \multicolumn{1}{c}{LeWM} & \multicolumn{1}{c}{\shortstack{CEM\\final}} & \multicolumn{1}{c}{\shortstack{CEM\\learned}} & \multicolumn{1}{c}{AP-CEM} & \multicolumn{1}{c}{Direct} & \multicolumn{1}{c}{\shortstack{Rank\\final}} & \multicolumn{1}{c}{\shortstack{Rank\\learned}} & \multicolumn{1}{c@{}}{AP-rank} \\
\midrule
Cube & Standard & 15.6 & 7.0 & \textbf{62.5} & 46.9 & 82.0 & 62.5 & \textbf{91.4} & 73.4 \\
 & Perturbed & 12.5 & 2.7 & \textbf{38.7} & 29.7 & 47.7 & 41.8 & \textbf{50.0} & 45.7 \\
\addlinespace[4pt]
PushT & Standard & 7.0 & 2.3 & 52.3 & \textbf{69.5} & 51.6 & 25.8 & 54.7 & \textbf{66.4} \\
 & Perturbed & 9.4 & 2.7 & 48.8 & \textbf{64.8} & 18.0 & 12.9 & 35.5 & \textbf{47.7} \\
\addlinespace[4pt]
Reacher & Standard & 66.4 & 25.8 & 54.7 & \textbf{75.8} & \textbf{99.2} & 90.6 & 95.3 & 96.9 \\
 & Perturbed & 68.8 & 22.7 & 59.0 & \textbf{78.9} & \textbf{96.9} & 87.5 & \textbf{96.9} & 95.7 \\
\addlinespace[4pt]
TwoRoom & Standard & 17.2 & 1.6 & \textbf{53.1} & 46.9 & \textbf{99.2} & 89.1 & 96.9 & 98.4 \\
 & Perturbed & 16.8 & 1.6 & \textbf{55.1} & 46.9 & \textbf{100.0} & 88.7 & 98.4 & \textbf{100.0} \\
\midrule
Mean & Standard & 26.6 & 9.2 & 55.7 & \textbf{59.8} & 83.0 & 67.0 & \textbf{84.6} & 83.8 \\
 & Perturbed & 26.9 & 7.4 & 50.4 & \textbf{55.1} & 65.6 & 57.7 & 70.2 & \textbf{72.3} \\
\bottomrule
\end{tabular*}
\endgroup

%% file: manuscript_fragments/target_quality_control_wrapper.tex
\begin{table}[t]
\centering
\captionsetup{justification=raggedright,singlelinecheck=false}
\caption{\textbf{Training-free observed targets achieve higher mean success than a more accurate learned target.}
We measure the mean squared distance to the recorded successor on held-out memory transitions and CEM success on standard-start evaluation queries.
Bold marks higher success. Appendix~\ref{app:v2-learned-target} explains how target accuracy is measured.}
\label{tab:target-quality-control}
\input{tables_v2/target_quality_control.tex}
\end{table}

%% file: tables_v2/target_quality_control.tex
\begingroup
\small
\renewcommand{\arraystretch}{1}
\setlength{\tabcolsep}{7pt}
\begin{tabular}{@{}lrrrr@{}}
\toprule
 & \multicolumn{2}{c}{Successor error} & \multicolumn{2}{c}{CEM success (\%)} \\
\cmidrule(lr){2-3}\cmidrule(l){4-5}
Task & \multicolumn{1}{c}{Learned} & \multicolumn{1}{c}{Observed} & \multicolumn{1}{c}{Learned} & \multicolumn{1}{c@{}}{Observed} \\
\midrule
Cube & 19.708 & 63.056 & \textbf{62.5} & 46.9 \\
PushT & 3.774 & 10.668 & 52.3 & \textbf{69.5} \\
Reacher & 74.724 & 154.351 & 54.7 & \textbf{75.8} \\
TwoRoom & 133.857 & 266.676 & \textbf{53.1} & 46.9 \\
\bottomrule
\end{tabular}
\endgroup

%% file: manuscript_fragments/ablation_table_wrapper.tex
\begin{table}[t]
\centering
\captionsetup{justification=raggedright,singlelinecheck=false}
\caption{\textbf{Target placement and retrieval timing determine how memory guides control.}
We average success (\%) across the four tasks. Predictions span $L=5$ actions, and $\ell$ sets the target's offset in the recorded trajectory.
When varying memory size, we keep the default target offset. In the fixed-span setting, we use $h=H$ throughout execution.
Bold marks the highest mean per column within each panel.}
\label{tab:v2-ablation}
\input{tables_v2/ablation.tex}
\end{table}

%% file: tables_v2/ablation.tex
\begingroup
\small
\renewcommand{\arraystretch}{1}
\setlength{\tabcolsep}{7pt}
\begin{tabular}{@{}lrrrr@{}}
\multicolumn{5}{@{}l}{\textbf{(a) Target offset and memory size}} \\
\toprule
 & \multicolumn{2}{c}{AP-rank} & \multicolumn{2}{c}{AP-CEM} \\
\cmidrule(lr){2-3}\cmidrule(l){4-5}
Variant & \multicolumn{1}{c}{Standard} & \multicolumn{1}{c}{Perturbed} & \multicolumn{1}{c}{Standard} & \multicolumn{1}{c@{}}{Perturbed} \\
\midrule
Target $\ell=1$ & 69.5 & 61.3 & 43.0 & 37.9 \\
Target $\ell=3$ & 78.5 & 68.5 & \textbf{69.9} & \textbf{62.6} \\
Target $\ell=5$ (default) & \textbf{83.8} & \textbf{72.3} & 59.8 & 55.1 \\
Target $\ell=10$ & 81.6 & 68.4 & 54.1 & 52.7 \\
\addlinespace[4pt]
Memory 10\% & 77.5 & 68.9 & 56.6 & 52.1 \\
Memory 25\% & 80.1 & 70.4 & 58.4 & 53.5 \\
Memory 50\% & 82.0 & 71.0 & 58.6 & 53.8 \\
\midrule
\multicolumn{5}{@{}l}{\textbf{(b) Retrieval design for AP-rank}} \\
Variant & \multicolumn{2}{c}{Standard} & \multicolumn{2}{c}{Perturbed} \\
\cmidrule(lr){2-3}\cmidrule(l){4-5}
Default & \multicolumn{2}{c}{83.8} & \multicolumn{2}{c}{72.3} \\
Key without distant endpoint & \multicolumn{2}{c}{\textbf{85.7}} & \multicolumn{2}{c}{\textbf{72.9}} \\
Key without displacement & \multicolumn{2}{c}{83.0} & \multicolumn{2}{c}{72.1} \\
Fixed retrieval span & \multicolumn{2}{c}{9.6} & \multicolumn{2}{c}{11.3} \\
\bottomrule
\end{tabular}
\endgroup

%% file: manuscript_fragments/diagnostic_table_wrapper.tex
\begin{table}[t]
\centering
\caption{\textbf{Live-state prediction improves endpoint accuracy and action selection.}
We report mean squared latent distances in task-specific units, with lower values indicating better performance. Bold marks each metric's minimum.
Displacement transfers recorded motion, whereas Direct executes the closest record's action.
We exclude starts if any candidate ends before five actions (at most 3 of 128 standard and 6 of 256 perturbed starts per task).}
\label{tab:confirmation-diagnostics}
\input{tables_v2/diagnostic.tex}
\end{table}

%% file: tables_v2/diagnostic.tex
\begingroup
\small
\setlength{\tabcolsep}{6pt}
\begin{tabular}{@{}llrrrrr@{}}
\toprule
 &  & \multicolumn{2}{c}{Endpoint error} & \multicolumn{3}{c}{Selection regret} \\
\cmidrule(lr){3-4}\cmidrule(l){5-7}
Task & Start & \multicolumn{1}{c}{Predictor} & \multicolumn{1}{c}{Displ.} & \multicolumn{1}{c}{Predictor} & \multicolumn{1}{c}{Displ.} & \multicolumn{1}{c@{}}{Direct} \\
\midrule
Cube & Standard & \textbf{7.538} & 35.344 & \textbf{5.332} & 7.535 & 11.375 \\
 & Perturbed & \textbf{37.726} & 66.860 & \textbf{8.472} & 11.731 & 14.903 \\
\addlinespace[4pt]
PushT & Standard & \textbf{1.532} & 3.078 & \textbf{0.200} & 0.580 & 0.849 \\
 & Perturbed & \textbf{2.980} & 9.562 & \textbf{0.753} & 2.418 & 4.449 \\
\addlinespace[4pt]
Reacher & Standard & \textbf{25.243} & 51.917 & \textbf{8.644} & 15.058 & 18.640 \\
 & Perturbed & \textbf{25.587} & 50.862 & \textbf{7.401} & 12.558 & 16.296 \\
\addlinespace[4pt]
TwoRoom & Standard & \textbf{13.560} & 162.035 & \textbf{1.433} & 29.838 & 41.749 \\
 & Perturbed & \textbf{12.532} & 160.497 & \textbf{1.283} & 20.154 & 37.307 \\
\bottomrule
\end{tabular}
\endgroup

%% file: sections/02_related.tex
\section{Related Work}
\label{sec:related}

\paragraph{Planning objectives.}
Artificial potential fields turn goals and obstacles into control objectives~\citep{khatib1986potential}. Local descent on such fields can be trapped at nonglobal minima~\citep{koren1991potential,barraquand1991rpp}, and with known free-space geometry, navigation functions can remove them~\citep{rimon1992exact}. Latent planners score predicted features~\citep{sobal2025pldm}, while Universal Planning Networks and Monotone Cost Ranking learn representations or costs for control~\citep{srinivas2018upn,chahe2026monotone}. \citet{terver2025jepawm} study optimizers and latent costs, showing that accurate prediction need not yield successful planning. Concurrent studies test decoded-position and learned temporal costs on TwoRoom~\citep{singh2026objective} or learn costs over latent rollouts~\citep{huang2026trajlewm}. We isolate target-induced stalling with exact dynamics and optimal short-horizon search, then compare targets with a fixed learned predictor, metric, and search.

\paragraph{Recorded and generated intermediate targets.}
SPTM and SoRB search graphs of observations using learned connectivity or distances~\citep{savinov2018sptm,eysenbach2019sorb}. PALMER likewise uses learned reachability to stitch recorded segments~\citep{beker2022palmer}. HWM predicts subgoals with a high-level world model~\citep{zhang2026hwm}, and Hi-LeWM improves subgoal quality by constraining high-level search near encoded training sequences~\citep{caselli2026hilewm}. SAGE learns target and action generators with frozen LeWM, using expert trajectories and low-dimensional state~\citep{cheng2026sage}. Its generator-only variant is closest to our observation-only CEM because it scores CEM candidates against a learned target. AP instead takes an observed successor from a retrieved segment as its target. Our learned-target baseline compares successor prediction with retrieval using the same visual inputs and memory.

\paragraph{Experience for action and target choice.}
VINN predicts actions by local regression on visual neighbors~\citep{pari2022vinn}. Our Direct baseline tests immediate action reuse under our current--goal retrieval rule. Recorded experience has also been used to estimate transitions~\citep{malato2025memory}, to aid hybrid LeWM planning~\citep{gao2026imwm}, to learn policies and trajectory models~\citep{florence2022ibc,chi2023diffusion,janner2022diffuser}, to relabel goals in hindsight~\citep{andrychowicz2017her}, and to learn rewards from video~\citep{chen2021dvd}. OGBench provides offline goal-conditioned benchmarks built from such data~\citep{park2025ogbench}. In AP, we use experience to choose the destination and live-state prediction to decide which synthesized or retrieved action approaches it.

%% file: sections/06_conclusion.tex
\section{Conclusion}
\label{sec:conclusion}

Where a planner aims can matter as much as what its model predicts. Our construction proves that goal-distance scoring can freeze a planner even when its predictions are perfect. On the LeWM tasks, a nearby target taken from recorded experience lets the same frozen models and planners reach goals they otherwise miss, with gains that additional final-goal search does not match. Success on goal-reaching benchmarks, a standard yardstick for world models, therefore measures the planning target as well as the predictor. Anchored Planning obtains these gains from three design choices: a nearby target, a retrieval span that shrinks during execution, and prediction from the live state. Because it needs nothing beyond an encoder, a predictor, and recorded experience, it applies directly to other world-model families that score predicted embeddings. Retargeting thus offers a direct way to draw more control from world models that already exist.

%% file: sections/A0_fresh_protocol.tex
\section{Evaluation Protocol}
\label{app:fresh-protocol}

\begin{table}[!htbp]
\centering
\caption{\textbf{Evaluation inputs.} $H$ is the recorded goal offset, $B$ the primitive-action allowance, and memory the number of eligible episodes.}
\label{tab:setup}
\input{tables/evaluation_inputs_main.tex}
\end{table}

\subsection{Query construction and retrieval memory}
Memory consists of the offline training trajectories provided with the task models. LeWM can be trained on both pseudo-expert and exploratory behavior-policy data~\citep{maes2026lewm}. Both memory settings provide ordered observations, temporal indices, and episode boundaries. Direct and the ranking controllers can also use the recorded action values.

The main study uses 128 reserved episodes per task, with one standard start and two preassigned five-action prefixes. Each prefix is a recorded five-action block from the task's training trajectories. Before control, we clip its actions to the action bounds and execute them while keeping the goal fixed. We sample queries and assign prefixes with fixed random seeds, so every controller sees the same queries, goals, and prefixes. Queries come from the source validation split for Cube, Reacher, and TwoRoom and from the source training split for PushT. Queries use previously unused eligible episodes. For every task, all 128 query episodes are excluded from the retrieval memory and from the trajectories that supply perturbation prefixes. The remaining memories contain 8,000, 16,688, 8,000, and 8,000 episodes for Cube, PushT, Reacher, and TwoRoom. We build the retrieval indices and horizon-specific statistics from these memories with the same frozen features and Equation~\eqref{eq:retrieval}.

All tasks reuse their supplied pretrained checkpoints. Retrieval memory, query episodes, and perturbation prefixes are fixed before evaluation and shared by all controllers.

Table~\ref{tab:setup} lists the task-specific goal offset $H$ and allowance $B$. The perturbation prefix is executed before the policy's action allowance begins. Policy time then starts at zero and increments by the number of executed primitives, so retrieval uses $\max(5,H-t)$. The environment clock continues through the prefix. If an episode succeeds or terminates before policy entry, we assign the same outcome to every controller. This includes success at the initial state or during a prefix. Success rates include these outcomes for all assigned queries.

\subsection{Physical success}
Evaluation uses the following task-specific physical predicates. They are distinct from the latent distances used for action selection.
\begin{itemize}
\item \textbf{Cube:} the selected object's three-dimensional position is at most $0.04$\,m from its target. The single-cube data-collection environment hides the target marker. Neither orientation nor gripper position is part of this predicate.
\item \textbf{PushT:} the Euclidean norm over the concatenated agent and object position errors is strictly below 20 environment units, and the shortest circular object-angle error is below $\pi/9$. 
\item \textbf{Reacher:} each of the two joint-position errors has absolute value below $0.05$\,rad. The errors use direct joint-angle differences.
\item \textbf{TwoRoom:} the Euclidean distance from the two-dimensional agent position to its goal is below 16 environment units.
\end{itemize}
We check success initially and after every primitive, using the task predicate at the stopping state to determine the outcome. A controller stops on success, on terminal failure, or when its primitive-action allowance runs out, so execution can end before a full block. The matched construction restores the same physical state and goal before running each controller. We use simulator coordinates to restore matched starts and to judge physical success, whereas the controller scores encoded images.

\subsection{Shared model and search implementation}
The pretrained model belongs to the joint-embedding family~\citep{assran2023ijepa,balestriero2025lejepa,maes2026lewm}. All controllers use the same task-specific frozen encoder. The predictive controllers also share the frozen forward model. Images are resized to 224 pixels with ImageNet normalization, and predicted endpoints are compared in the 192-dimensional projected latent space. Each prediction starts from the current image alone. 

We keep LeWM's pretrained action normalizer unchanged and compute retrieval statistics from the active memory. For a raw action block $u$, $F(z,u)$ applies this normalization before prediction. The recorded-action controllers and CEM clip proposals to the action bounds before scoring and execute exactly the clipped actions. Clipping leaves the target unchanged: it remains the encoding of the recorded successor image.

Unlike LeWM's multiblock planner, our three CEM variants search over a single five-action block. They use 30 iterations, 300 candidates, 30 elites, an initial normalized mean of zero, initial standard deviation $1/3$, and an elite-standard-deviation floor of $10^{-5}$. The search maintains means and standard deviations in normalized coordinates and includes the current mean as the first candidate in each population. For scoring, we clip candidates to the action bounds expressed in normalized coordinates. The elite mean and standard deviation are computed from the unclipped proposals. After selection, we map the chosen block back to raw actions and clip it to the action bounds to remove round-off at the boundary. Finally, we compare the clipped final mean with the clipped initial mean and keep the initial mean unless the final mean has lower cost. For a given task and query, all three target variants and all start conditions use the same random seed. Each search starts from the fixed Gaussian prior. After the 30 populations, the final and initial means are scored together, giving 31 batches and 9,002 predicted blocks per decision.

Direct and the three ranking controllers use the same retrieval rule, and its eight starts can come from the same episode. AP-rank and final-goal rank share the first candidate predictions exactly, and the CEM controllers share the first prior and search noise. Once the policies choose different actions, their later states, retrieved records, and random draws can differ.

\subsection{Aggregation and prediction work}
\label{app:fresh-results}
To aggregate perturbed success rates, we first average the two starts of each query and then average queries within a task. All four tasks receive equal weight in task means. Throughout this appendix, we report standard starts and the perturbation average. Original-offset results are reused across comparisons.

To measure prediction work, we count the five-action blocks predicted until the episode ends. Ranking evaluates eight blocks in one batch per decision, and single-block CEM evaluates $300I+2$ blocks for $I$ iterations. LeWM evaluates 45,000 blocks per decision using its multiblock search and executes up to 25 primitive actions before replanning. Direct uses no transition prediction for action selection. Encoding, target construction, retrieval, and interaction also contribute to elapsed runtime.

\subsection{Observation-only deployment}
\label{app:observation-memory}

Observation-only memory supplies ordered images, their frozen features, episode boundaries, and temporal indices. These identify the retrieved segment and the observed target. CEM generates candidate actions and passes them to the frozen predictor, which was pretrained with actions. Both memory settings use the same retrieval and target construction.

%% file: tables/evaluation_inputs_main.tex
\begin{tabular}{@{}lrrrr@{}}
\toprule
Task & \multicolumn{1}{c}{Cube} & \multicolumn{1}{c}{PushT} & \multicolumn{1}{c}{Reacher} & \multicolumn{1}{c@{}}{TwoRoom} \\
\midrule
$H$ & 150 & 140 & 150 & 100 \\
$B$ & 150 & 280 & 300 & 200 \\
Memory & 8,000 & 16,688 & 8,000 & 8,000 \\
\bottomrule
\end{tabular}

%% file: sections/A4_reproducibility.tex
\subsection{Algorithm and implementation details}
\label{app:reproducibility}

Algorithm~\ref{alg:anchored} gives the AP-rank procedure used in every main-study task and condition. Section~\ref{sec:anchoring} defines how it retrieves a record, sets a target, selects a clipped action block, and replans after execution. The preceding protocol defines the paired starts and physical success criteria.

\input{tables/ap_rank_algorithm.tex}

The LeWM planner keeps its action mapping and execution settings, which Appendix~\ref{app:lewm-audit} describes, and the independent supporting study uses the mapping described in Appendix~\ref{app:independent}.

The current observation is encoded with the same frozen encoder as the memory before retrieval and scoring. For each horizon $h$, the retrieval-key statistics are computed from the records in $\mathcal B_h$. Coordinate-wise population standard deviations are floored at $10^{-4}$, and the source, distant endpoint, and displacement parts have equal weights. $\mathcal B_h$ and these statistics stay fixed during evaluation. Unlike our controllers, LeWM's planner accumulates up to three context blocks within its rollout.

All experiments run on NVIDIA H20 GPUs with PyTorch. The tasks use their MuJoCo-based simulators.

%% file: tables/ap_rank_algorithm.tex
\begin{algorithm}[t]
\caption{AP-rank with an observed target for a frozen predictor}
\label{alg:anchored}
\begin{algorithmic}[1]
\Require frozen $E,F$; memory $\{\mathcal B_h\}$; current image $o_0$; goal image $o_g$; supplied $H$; action allowance $B$
\State $L\gets5$; $k\gets8$; $z_g\gets E(o_g)$; $t\gets0$
\While{$t<B$, the goal is not reached, and the environment is active}
  \State $z_t\gets E(o_t)$; $h\gets\max\{L,H-t\}$
  \State retrieve $(s_1,\ldots,s_k)$ from $\mathcal B_h$ using Equation~\eqref{eq:retrieval}
  \State $q_{\rm obs}\gets E(o_{s_1+L})$; $\mathcal C\gets\{P(u_{s_1}),\ldots,P(u_{s_k})\}$
  \State predict $\{F(z_t,u):u\in\mathcal C\}$ in one batch
  \State select $\hat u$ by Equation~\eqref{eq:anchored}, breaking ties by retrieval order
  \State execute up to $L$ primitives from $\hat u$, stopping at success, termination, or $B$
  \State increase $t$ by the executed primitives and observe the next image $o_t$
\EndWhile
\end{algorithmic}
\end{algorithm}

%% file: sections/A1_theory.tex
\section{Properties of Target-Based Selection}
\label{app:theory}

\subsection{A continuous counterexample with smooth dynamics}
\label{app:smooth-counterexample}

\begin{proof}[Proof of Proposition~\ref{prop:exact-target}]
Let $\mathcal X=\{c(s):s\in\mathbb R\}$, where $c(s)=(s,s^2)$, and use the identity encoder. A primitive action $a\in[0,1/8]$ has transition
\[
T((x,y),a)=(x+a,\ y+2xa+a^2).
\]
This polynomial map is the unit-time flow of $\dot x=a$, $\dot y=2xa$ under constant control and preserves $\mathcal X$. Thus a five-action block $u\in\mathcal C=[0,1/8]^5$ has exact endpoint
\[
F(c(s),u)=c\!\left(s+\sum_{i=1}^5a_i\right).
\]
Both controllers globally minimize squared endpoint distance over the entire continuous set $\mathcal C$, execute the chosen block, and observe again. Only the target differs.

For the final goal $g=c(2)$, write
\[
D(s)=\|c(s)-g\|^2=(s-2)^2+(s^2-4)^2,
\qquad D'(s)=2(s-2)(2s^2+4s+1).
\]
The quadratic roots are $-1\pm1/\sqrt2$, so $D'(s)>0$ throughout $[-3/2,-3/8]$. From any $s_0\in[-3/2,-1]$, an admissible block ends at $c(s_0+d_u)$ with $0\leq d_u\leq5/8$, within that increasing region. Every $d_u>0$ increases final-goal distance. Nonnegative actions make the zero block the unique global minimizer. It leaves the state unchanged, so final-goal selection never reaches $g$.

Now take one feasible recorded trajectory with consecutive block endpoints
\[
(r_0,r_1,r_2,r_3,r_4,r_5,r_6)
=\left(-\frac32,-\frac78,-\frac14,\frac38,1,\frac{13}{8},2\right).
\]
Its first five blocks use $a_i=1/8$ and the last uses $a_i=3/40$. Supply the observed targets $c(r_1),\ldots,c(r_6)$ in order. For every allowed $s_0$, the first increment $r_1-s_0\in[1/8,5/8]$ is feasible. Each later increment is also positive and at most $5/8$. At each decision a block attains zero target cost, so every global minimizer reaches that target exactly. Induction gives the final goal in six blocks. Ties between primitive sequences do not change their endpoint, and primitive-level stopping can only shorten the final block.
\end{proof}

The dynamics admit a continuous route from every stated start. Failure under final-goal scoring comes from rejecting the temporary increase in distance needed to follow it. The prescribed recorded sequence isolates the target's role: with the same exact dynamics and the same one-block global minimization, it yields a goal-reaching controller.

\paragraph{The failure persists with symmetric actions.}
Allow $a_i\in[-1/8,1/8]$ and write $\alpha=-1-1/\sqrt2$, $\beta=-1+1/\sqrt2$. Below $\beta$, $D$ decreases to $\alpha$ and then increases. From $s_0\in[-3/2,-1]$, the reachable interval $[s_0-5/8,s_0+5/8]$ lies below $\beta$. Its minimizing coordinate is $\max\{\alpha,s_0-5/8\}$, and repeating this rule reaches $\alpha$ within two blocks. Since $\alpha+5/8<\beta$, subsequent minimizing blocks also end at $\alpha$. The same observed-target sequence remains feasible from each original start and reaches the goal. Allowing both action directions thus preserves failure under final-goal scoring.

\subsection{How endpoint errors affect action selection}
\label{app:score-sensitivity}
At a fixed live state and target $q$, let $e_u$ be an action block's exact encoded endpoint and $r_u=F(z_t,u)-e_u$ its prediction error. The resulting error in target cost is
\[
\xi_u(q)=J_q(u)-\|e_u-q\|^2
=2(e_u-q)^\top r_u+\|r_u\|^2.
\]
The predicted cost difference between $u$ and $v$ therefore differs from the exact one by $\xi_u(q)-\xi_v(q)$. Two endpoint errors with the same size can change action costs differently: the dot product depends on the direction of each error relative to the target. If both costs change by the same amount, their ordering stays unchanged.

If exact target costs favor $u$ over $v$ by a gap $m>0$, that ordering is preserved whenever $\xi_u(q)-\xi_v(q)<m$. This explains why endpoint error and selection regret provide complementary information in Table~\ref{tab:confirmation-diagnostics}: the first measures endpoint accuracy, while the second measures the cost of the action chosen under the target.

Proposition~\ref{prop:exact-target} concerns the target itself. With $r_u=0$, every predicted target cost is exact, yet final-goal scoring still favors staying still in the construction. Changing the target changes those costs without changing the dynamics.

%% file: sections/C_rq1.tex
\section{RQ1: Target Choice and Control}
\label{app:v2-main}
The main target comparisons use the shared protocol in Appendix~\ref{app:fresh-protocol}. This section gives target-model selection and the physical behavior behind the success rates in Table~\ref{tab:main-results}.

\subsection{Learned intermediate targets}
\label{app:v2-learned-target}

We train a target predictor on the same memory episodes available to Anchored
Planning. For each task, a fixed episode split reserves five percent of memory
for validation. Control-query episodes are excluded from both splits. To form a
training example, we sample an offset $h$ uniformly from $5,\ldots,H_{\rm task}$,
where $H_{\rm task}$ is the task's main goal offset, and then choose a valid source
position $s$ uniformly among the training positions at that offset.
The input is $(z_s,z_{s+h},z_{s+h}-z_s,h/H_{\rm task})$, and the target is the observed
successor $z_{s+5}$. All encodings come from the frozen LeWM encoder.

The predictor is an MLP with three hidden layers, each using LayerNorm and GELU.
We compare widths 512 and 1024, with either an absolute output or a residual
added to $z_s$. Inputs and outputs are standardized coordinate-wise using
262,144 sampled training transitions. We train with AdamW using batch size 1024, learning rate
$3\times10^{-4}$, weight decay $10^{-4}$, and cosine decay over at most 50,000
updates. Validation uses the same 4096 held-out transitions for all four
configurations. We evaluate every 500 updates and stop after ten checks without
improvement, once at least 5000 updates have run. The selected checkpoint
minimizes unstandardized latent-coordinate MSE, which gives absolute and
residual models the same selection scale.

At execution, the MLP receives the current encoding, final-goal encoding,
their difference, and $h(t)/H_{\rm task}$. Its output replaces the scoring target for either CEM or ranking. CEM keeps its initialization and search settings, ranking scores the same retrieved blocks, and both keep the frozen predictor, action clipping, and execution rule of the main study.

\begin{table}[!htbp]
\centering
\caption{\textbf{Selecting learned targets on held-out memory.}
We average squared error over latent coordinates to compute validation MSE.
Bold marks the selected configuration.}
\label{tab:v2-target-models}
\input{tables_v2/target_models.tex}
\end{table}

Table~\ref{tab:target-quality-control} relates target accuracy to control. To measure target accuracy, we use 4096 held-out transitions per task and exclude each source episode from retrieval. Successor errors and nearest-memory distances are squared latent distances. An observed target has zero distance to memory by construction.

\subsection{Physical behavior along a rollout}
\label{app:v2-physical-behavior}
Physical error is measured in units of the success thresholds. For Cube and TwoRoom, $\hat e_t$ is position error divided by $0.04$\,m and 16 units, respectively. For Reacher, it is the larger joint error divided by $0.05$\,rad. For PushT, it is the maximum of combined position error divided by 20 and the wrapped object-angle error divided by $\pi/9$. We apply the success criteria defined in Appendix~\ref{app:fresh-protocol}.

A failed rollout stalls when its task-relevant coordinates remain within one quarter of their success thresholds from the state at the beginning of the last $W=10$ decisions. Shorter rollouts use their available window. A successful rollout makes a detour when $\hat e_t$ rises by at least $\delta=0.5$ above its previous running minimum.

Table~\ref{tab:v2-mechanism} gives the rates with their eligible counts, and Table~\ref{tab:behavior-sensitivity} varies the window and threshold for the three CEM target rules.

\input{tables_compact/behavior.tex}
\input{tables_compact/sensitivity.tex}

\subsection{A matched PushT rollout}
\label{app:qualitative-rollout}
We examine main-study queries in a fixed order, testing the first prefix before the second for each query. Figure~\ref{fig:behavior} shows the physical states throughout the first rollout in which AP-rank succeeds and Direct fails.

\begin{figure}[!htbp]
\centering
\includegraphics[width=3.5in]{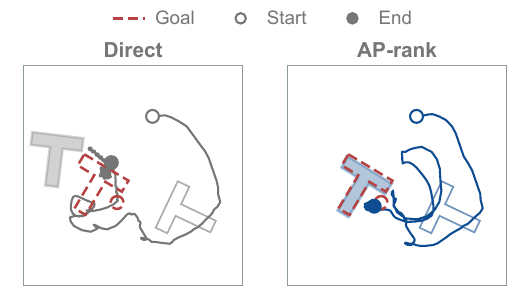}
\caption{\textbf{Predictive ranking changes a matched PushT execution.} Both controllers share the perturbed start and goal. Curves trace the agent's path. Circles show the agent, and T shapes show the pushed block. Hollow and filled shapes show initial and terminal poses, and dashed outlines mark the goal.}
\label{fig:behavior}
\end{figure}

%% file: tables_v2/target_models.tex
\begingroup
\small
\setlength{\tabcolsep}{4pt}
\begin{tabular*}{\linewidth}{@{\extracolsep{\fill}}lrllr@{}}
\toprule
Task & \multicolumn{1}{c}{Width} & Output & Best step & \multicolumn{1}{c@{}}{Validation MSE} \\
\midrule
Cube & 512 & Absolute & 48,500 & 0.104926 \\
 & 512 & Residual & 48,500 & 0.104579 \\
 & 1024 & Absolute & 41,000 & 0.104422 \\
 & 1024 & Residual & 48,500 & \textbf{0.102646} \\
\addlinespace[4pt]
PushT & 512 & Absolute & 48,000 & 0.023950 \\
 & 512 & Residual & 48,000 & 0.024007 \\
 & 1024 & Absolute & 47,500 & 0.020210 \\
 & 1024 & Residual & 45,500 & \textbf{0.019658} \\
\addlinespace[4pt]
Reacher & 512 & Absolute & 11,500 & 0.391840 \\
 & 512 & Residual & 21,500 & \textbf{0.389188} \\
 & 1024 & Absolute & 11,500 & 0.391255 \\
 & 1024 & Residual & 21,500 & 0.389396 \\
\addlinespace[4pt]
TwoRoom & 512 & Absolute & 14,500 & \textbf{0.697172} \\
 & 512 & Residual & 15,500 & 0.702978 \\
 & 1024 & Absolute & 8,000 & 0.697239 \\
 & 1024 & Residual & 7,500 & 0.704444 \\
\bottomrule
\end{tabular*}
\endgroup

%% file: tables_compact/behavior.tex
\begin{table}[!htbp]
\centering
\caption{\textbf{Physical behavior at the original goal offsets.} Each percentage is followed by the number of positive cases and eligible rollouts. With $W=10$, we measure stalling over failed rollouts in which the policy made at least one decision. With $\delta=0.5$, we measure detours over successful rollouts that entered policy control. For perturbed behavior, we pool eligible rollouts from both prefixes.}
\label{tab:v2-mechanism}
\begingroup\small
\setlength{\tabcolsep}{3pt}
\begin{tabular}{@{}llrrrr@{}}
\toprule
Task & Controller & \multicolumn{2}{c}{Standard} & \multicolumn{2}{c}{Perturbed} \\
\cmidrule(lr){3-4}\cmidrule(lr){5-6}
 &  & \multicolumn{1}{c}{Stalling} & \multicolumn{1}{c}{Detour} & \multicolumn{1}{c}{Stalling} & \multicolumn{1}{c@{}}{Detour} \\
\midrule
Cube & LeWM & 39.8 (43/108) & 10.5 (2/19) & 48.2 (108/224) & 6.7 (2/30) \\
 & CEM (final goal) & 100.0 (119/119) & 37.5 (3/8) & 100.0 (249/249) & 40.0 (2/5) \\
 & CEM (learned) & 79.2 (38/48) & 60.8 (48/79) & 89.2 (140/157) & 68.0 (66/97) \\
 & AP-CEM & 86.8 (59/68) & 66.1 (39/59) & 92.2 (166/180) & 73.0 (54/74) \\
 & CEM (transported) & 91.3 (73/80) & 68.1 (32/47) & 95.4 (186/195) & 71.2 (42/59) \\
 & Direct & 87.0 (20/23) & 60.6 (63/104) & 94.0 (126/134) & 64.2 (77/120) \\
 & Final-goal rank & 93.8 (45/48) & 55.7 (44/79) & 99.3 (148/149) & 61.0 (64/105) \\
 & Learned-target rank & 100.0 (11/11) & 64.7 (75/116) & 89.1 (114/128) & 63.5 (80/126) \\
 & AP-rank & 100.0 (34/34) & 61.3 (57/93) & 92.8 (129/139) & 67.0 (77/115) \\
\addlinespace[3pt]
PushT & LeWM & 28.6 (34/119) & 88.9 (8/9) & 28.0 (65/232) & 83.3 (20/24) \\
 & CEM (final goal) & 80.0 (100/125) & 33.3 (1/3) & 78.7 (196/249) & 42.9 (3/7) \\
 & CEM (learned) & 80.3 (49/61) & 98.5 (66/67) & 81.7 (107/131) & 99.2 (124/125) \\
 & AP-CEM & 92.3 (36/39) & 100.0 (89/89) & 83.3 (75/90) & 100.0 (166/166) \\
 & CEM (transported) & 87.8 (43/49) & 100.0 (79/79) & 78.2 (111/142) & 100.0 (114/114) \\
 & Direct & 95.2 (59/62) & 100.0 (66/66) & 91.0 (191/210) & 100.0 (46/46) \\
 & Final-goal rank & 90.5 (86/95) & 100.0 (33/33) & 88.3 (197/223) & 100.0 (33/33) \\
 & Learned-target rank & 93.1 (54/58) & 100.0 (70/70) & 87.3 (144/165) & 98.9 (90/91) \\
 & AP-rank & 90.7 (39/43) & 100.0 (85/85) & 89.6 (120/134) & 100.0 (122/122) \\
\addlinespace[3pt]
Reacher & LeWM & 0.0 (0/43) & 87.1 (74/85) & 0.0 (0/80) & 90.8 (157/173) \\
 & CEM (final goal) & 0.0 (0/95) & 51.5 (17/33) & 0.0 (0/198) & 47.3 (26/55) \\
 & CEM (learned) & 0.0 (0/58) & 92.9 (65/70) & 0.0 (0/105) & 93.9 (139/148) \\
 & AP-CEM & 0.0 (0/31) & 96.9 (94/97) & 0.0 (0/54) & 97.5 (194/199) \\
 & CEM (transported) & 0.0 (0/24) & 96.2 (100/104) & 0.0 (0/50) & 97.5 (198/203) \\
 & Direct & 0.0 (0/1) & 97.6 (124/127) & 0.0 (0/8) & 99.2 (243/245) \\
 & Final-goal rank & 0.0 (0/12) & 94.0 (109/116) & 0.0 (0/32) & 95.0 (210/221) \\
 & Learned-target rank & 0.0 (0/6) & 100.0 (122/122) & 0.0 (0/8) & 99.6 (244/245) \\
 & AP-rank & 0.0 (0/4) & 98.4 (122/124) & 0.0 (0/11) & 99.6 (241/242) \\
\addlinespace[3pt]
TwoRoom & LeWM & 0.0 (0/106) & 68.2 (15/22) & 0.0 (0/213) & 51.2 (22/43) \\
 & CEM (final goal) & 33.3 (42/126) & 0.0 (0/2) & 32.1 (81/252) & 0.0 (0/4) \\
 & CEM (learned) & 0.0 (0/60) & 27.9 (19/68) & 0.0 (0/115) & 27.0 (38/141) \\
 & AP-CEM & 19.1 (13/68) & 73.3 (44/60) & 15.4 (21/136) & 75.8 (91/120) \\
 & CEM (transported) & 41.7 (35/84) & 59.1 (26/44) & 39.8 (66/166) & 64.4 (58/90) \\
 & Direct & 0.0 (0/1) & 96.9 (123/127) & -- (0/0) & 99.2 (254/256) \\
 & Final-goal rank & 7.1 (1/14) & 93.9 (107/114) & 6.9 (2/29) & 96.5 (219/227) \\
 & Learned-target rank & 25.0 (1/4) & 97.6 (121/124) & 0.0 (0/4) & 96.8 (244/252) \\
 & AP-rank & 0.0 (0/2) & 98.4 (124/126) & -- (0/0) & 99.2 (254/256) \\
\bottomrule
\end{tabular}
\endgroup
\end{table}

%% file: tables_compact/sensitivity.tex
\begin{table}[!htbp]
\centering
\caption{\textbf{Behavior across thresholds.} We average rates (\%) equally across the four tasks, computing each rate from that task's eligible rollout counts.}
\label{tab:behavior-sensitivity}
\begingroup\small
\setlength{\tabcolsep}{4pt}
\begin{tabular}{@{}llrrrrrr@{}}
\toprule
Controller & Start & \multicolumn{3}{c}{Stalling window} & \multicolumn{3}{c}{Detour threshold} \\
\cmidrule(lr){3-5}\cmidrule(lr){6-8}
 &  & \multicolumn{1}{c}{$W=5$} & \multicolumn{1}{c}{$10$} & \multicolumn{1}{c}{$20$} & \multicolumn{1}{c}{$\delta=0.25$} & \multicolumn{1}{c}{$0.5$} & \multicolumn{1}{c@{}}{$1.0$} \\
\midrule
CEM (final goal) & Standard & 55.1 & 53.3 & 49.9 & 68.6 & 30.6 & 29.1 \\
CEM (final goal) & Perturbed & 54.9 & 52.7 & 49.7 & 46.4 & 32.5 & 19.0 \\
CEM (learned) & Standard & 46.1 & 39.9 & 31.4 & 77.6 & 70.0 & 53.7 \\
CEM (learned) & Perturbed & 45.6 & 42.7 & 32.1 & 79.3 & 72.0 & 58.9 \\
AP-CEM & Standard & 51.7 & 49.5 & 40.5 & 90.6 & 84.1 & 70.7 \\
AP-CEM & Perturbed & 52.2 & 47.7 & 39.3 & 91.5 & 86.6 & 71.8 \\
\bottomrule
\end{tabular}
\endgroup
\end{table}

%% file: sections/D_rq2.tex
\section{RQ2: Search Budget and Goal Distance}
\label{app:rq2}
\label{app:v2-interventions}
\subsection{Search budgets}
In the budget sweep, we vary CEM iterations over $\{1,2,5,10,30\}$ while preserving the main queries, perturbation prefixes, frozen models, and action clipping. At every budget, we use the same query seed and initial proposals, then draw candidates for the specified number of iterations. The largest-budget points reuse the main outcomes. Table~\ref{tab:budget-full} gives both start conditions.

\input{tables_compact/budget.tex}
\FloatBarrier

\subsection{LeWM configuration}
\label{app:lewm-audit}
The LeWM planner is implemented in the \texttt{stable-worldmodel} library (version 0.0.6). Its five model blocks each contain five primitive actions. Before replanning, the policy executes all five blocks unless success, termination, or the action allowance stops it earlier. CEM uses 300 candidates, 30 iterations, 30 elites, initial standard deviation $1$, and the LeWM history and mean updates. Its actions are de-normalized with the pretrained normalizer and sent directly to the environment. Starts, goals, action allowances, checkpoints, and success predicates match the target comparisons. Our CEM variants, by contrast, share one search setting and differ only in the target.

\paragraph{LeWM evaluation protocol.}
The LeWM evaluator samples valid start states from the supplied datasets, takes the goal image 24 actions after the start, and allows 50 actions. Our $H=25$ queries place the goal 25 actions after the start and allow 25 actions on Cube and 50 on the other tasks. The two protocols also start episodes differently. The LeWM evaluator scores the dataset image of the start and, on Cube, restores the full recorded simulator state, including joint positions and velocities. We instead render the restored start, and every controller in our study begins from a state reconstructed from the public observation. Finally, the LeWM configuration uses 30 CEM iterations on every task, although \citet{maes2026lewm} report ten outside PushT.

Table~\ref{tab:lewm-released-check} lists the published rates next to runs of the LeWM evaluator and of our implementation with the same queries, action mapping, and CEM random seeds. The two implementations give the same outcome on 599 of 600 episodes.

\input{tables_compact/lewm_release_check.tex}

\Needspace{4\baselineskip}
On Cube and PushT, query sampling produces the largest change among the factors we varied one at a time: with the LeWM sampling rule, LeWM success in our $H=25$ setting rises from 32.0\% to 71.9\% on Cube and from 60.9\% to 82.8\% on PushT, whereas a 50-action allowance alone raises Cube to 36.7\%. Applying the LeWM success criteria to 512 main-study episodes changes no outcome.

\paragraph{Five-action replanning.}
The additional variant replans after at most five primitive actions instead of 25. It keeps the LeWM lookahead, candidate count, iterations, elites, initial deviation, and warm start, and uses the main queries, goals, and action allowances. Table~\ref{tab:lewm-replan5} reports its results.

\input{tables_compact/lewm_replan5.tex}
\FloatBarrier

\subsection{Recorded goal offsets}
Each goal is the observation at the selected offset from the original query source. We add $H=25$, $50$, and $100$ for Cube, PushT, and Reacher, and $H=25$ and $50$ for TwoRoom. We set the action allowance to preserve each task's original $B/H$. The learned target retains its training normalization $H_{\rm task}$. Original-offset points reuse the main outcomes. Table~\ref{tab:horizon-full} gives synthesis and ranking results.

\input{tables_compact/horizon.tex}

\Needspace{9\baselineskip}
\subsection{Prediction work}
Table~\ref{tab:work-summary} reports prediction work at the original offsets, counted as in Appendix~\ref{app:fresh-results}. LeWM can execute up to 25 actions between decisions, whereas the other predictive controllers execute up to five. 

\input{tables_compact/work.tex}

%% file: tables_compact/budget.tex
\begin{table}[!htbp]
\centering
\caption{\textbf{Success across search budgets.} We report success (\%) on the main queries at the original offsets. Only the number of CEM iterations changes, and each decision evaluates $300I+2$ predicted blocks. For perturbed starts, we average the two starts of each query.}
\label{tab:budget-full}
\begingroup\small
\setlength{\tabcolsep}{4pt}
\begin{tabular}{@{}lrrrrrrr@{}}
\toprule
Task & \multicolumn{1}{c}{Iterations} & \multicolumn{2}{c}{Final goal} & \multicolumn{2}{c}{Learned target} & \multicolumn{2}{c}{Observed target} \\
\cmidrule(lr){3-4}\cmidrule(lr){5-6}\cmidrule(lr){7-8}
 &  & \multicolumn{1}{c}{Standard} & \multicolumn{1}{c}{Perturbed} & \multicolumn{1}{c}{Standard} & \multicolumn{1}{c}{Perturbed} & \multicolumn{1}{c}{Standard} & \multicolumn{1}{c@{}}{Perturbed} \\
\midrule
Cube & 1 & 3.9 & 2.0 & 11.7 & 6.6 & 10.9 & 7.0 \\
 & 2 & 3.1 & 2.3 & 40.6 & 26.2 & 26.6 & 15.6 \\
 & 5 & 3.1 & 2.3 & 52.3 & 34.0 & 36.7 & 26.6 \\
 & 10 & 4.7 & 3.9 & 59.4 & 35.2 & 44.5 & 29.7 \\
 & 30 & 7.0 & 2.7 & 62.5 & 38.7 & 46.9 & 29.7 \\
\addlinespace[4pt]
PushT & 1 & 1.6 & 2.0 & 3.9 & 3.9 & 3.1 & 1.6 \\
 & 2 & 1.6 & 2.0 & 25.8 & 27.0 & 8.6 & 10.9 \\
 & 5 & 1.6 & 2.0 & 44.5 & 41.8 & 43.8 & 43.4 \\
 & 10 & 1.6 & 2.7 & 48.4 & 44.5 & 69.5 & 68.0 \\
 & 30 & 2.3 & 2.7 & 52.3 & 48.8 & 69.5 & 64.8 \\
\addlinespace[4pt]
Reacher & 1 & 16.4 & 15.2 & 48.4 & 50.0 & 59.4 & 56.6 \\
 & 2 & 18.8 & 18.4 & 50.8 & 53.9 & 64.8 & 67.2 \\
 & 5 & 24.2 & 20.3 & 60.2 & 59.8 & 78.9 & 75.8 \\
 & 10 & 23.4 & 23.4 & 63.3 & 60.5 & 85.9 & 75.4 \\
 & 30 & 25.8 & 22.7 & 54.7 & 59.0 & 75.8 & 78.9 \\
\addlinespace[4pt]
TwoRoom & 1 & 0.0 & 0.0 & 34.4 & 35.2 & 7.0 & 9.0 \\
 & 2 & 0.0 & 0.0 & 39.1 & 41.0 & 19.5 & 20.3 \\
 & 5 & 0.8 & 0.0 & 46.9 & 49.2 & 37.5 & 37.1 \\
 & 10 & 0.8 & 1.6 & 55.5 & 53.5 & 43.0 & 45.3 \\
 & 30 & 1.6 & 1.6 & 53.1 & 55.1 & 46.9 & 46.9 \\
\bottomrule
\end{tabular}
\endgroup
\end{table}

%% file: tables_compact/lewm_release_check.tex
\begin{table}[htbp]
\centering\small
\caption{\textbf{LeWM under its evaluation protocol.} We compare published success (\%) from Figure~6 of \citet{maes2026lewm} with measurements from the LeWM evaluator and our implementation on the same queries. Measurements are averaged over three seeds with 50 queries each, including the seed specified in the LeWM configuration.}
\label{tab:lewm-released-check}
\setlength{\tabcolsep}{7pt}
\begin{tabular}{lrrr}\toprule
Task & Paper report & LeWM evaluator & Our implementation\\
\midrule
Cube & 74.0 & 74.7 & 74.7 \\
PushT & 96.0 & 92.0 & 91.3 \\
Reacher & 86.0 & 80.0 & 80.0 \\
TwoRoom & 87.0 & 87.3 & 87.3 \\
\bottomrule\end{tabular}
\end{table}

%% file: tables_compact/lewm_replan5.tex
\begin{table}[htbp]
\centering\small
\caption{\textbf{LeWM with five-action replanning.} The variant keeps the 25-action lookahead and the LeWM CEM settings and changes only execution, which now covers one five-action block. Evaluation uses the main queries and original goal offsets, with 128 standard starts and 256 perturbed starts per task. Work is the mean number of predicted five-action blocks per episode, in thousands as in Table~\ref{tab:work-summary}.}
\label{tab:lewm-replan5}
\setlength{\tabcolsep}{7pt}
\begin{tabular}{lrrrr}\toprule
& \multicolumn{2}{c}{Success (\%)} & \multicolumn{2}{c}{Work ($10^3$ blocks)}\\
\cmidrule(lr){2-3}\cmidrule(lr){4-5}
Task & Standard & Perturbed & Standard & Perturbed\\\midrule
Cube & 13.3 & 13.3 & 1,189 & 1,197 \\
PushT & 7.0 & 8.2 & 2,402 & 2,374 \\
Reacher & 73.4 & 77.3 & 1,388 & 1,322 \\
TwoRoom & 27.3 & 25.8 & 1,500 & 1,509 \\
\bottomrule\end{tabular}
\end{table}

%% file: tables_compact/horizon.tex
\begin{table}[!t]
\centering
\caption{\textbf{Success across recorded goal offsets.} To compare success (\%) across offsets, we keep query sources fixed and move the goal to the observation $H$ actions later, preserving each task's $B/H$. Prediction lookahead and target-model training remain unchanged. Results are shown for standard starts (Std.) and for the average of each query's perturbed starts (Pert.).}
\label{tab:horizon-full}
\begingroup\small
\setlength{\tabcolsep}{3.5pt}
\textit{Action synthesis}\par
\smallskip
\begin{tabular}{@{}lrrrrrrrrr@{}}
\toprule
Task & \multicolumn{1}{c}{$H$} & \multicolumn{2}{c}{LeWM} & \multicolumn{2}{c}{Final goal} & \multicolumn{2}{c}{Learned} & \multicolumn{2}{c}{AP-CEM} \\
\cmidrule(lr){3-4}\cmidrule(lr){5-6}\cmidrule(lr){7-8}\cmidrule(lr){9-10}
 &  & \multicolumn{1}{c}{Std.} & \multicolumn{1}{c}{Pert.} & \multicolumn{1}{c}{Std.} & \multicolumn{1}{c}{Pert.} & \multicolumn{1}{c}{Std.} & \multicolumn{1}{c}{Pert.} & \multicolumn{1}{c}{Std.} & \multicolumn{1}{c@{}}{Pert.} \\
\midrule
Cube & 25 & 32.0 & 33.6 & 28.9 & 26.2 & 75.0 & 53.9 & 63.3 & 49.6 \\
 & 50 & 16.4 & 12.1 & 7.0 & 4.3 & 58.6 & 37.9 & 47.7 & 30.5 \\
 & 100 & 18.8 & 13.3 & 8.6 & 9.0 & 55.5 & 34.8 & 32.0 & 25.4 \\
 & 150 & 15.6 & 12.5 & 7.0 & 2.7 & 62.5 & 38.7 & 46.9 & 29.7 \\
\addlinespace[3pt]
PushT & 25 & 60.9 & 57.8 & 51.6 & 46.9 & 91.4 & 75.4 & 91.4 & 87.5 \\
 & 50 & 22.7 & 21.9 & 6.3 & 9.0 & 76.6 & 63.7 & 82.0 & 81.6 \\
 & 100 & 10.9 & 9.4 & 3.9 & 4.3 & 57.0 & 48.4 & 77.3 & 73.8 \\
 & 140 & 7.0 & 9.4 & 2.3 & 2.7 & 52.3 & 48.8 & 69.5 & 64.8 \\
\addlinespace[3pt]
Reacher & 25 & 68.8 & 67.6 & 66.4 & 66.0 & 57.8 & 57.8 & 67.2 & 71.1 \\
 & 50 & 95.3 & 92.6 & 57.0 & 52.3 & 53.1 & 53.5 & 81.3 & 76.2 \\
 & 100 & 83.6 & 82.0 & 35.2 & 35.2 & 48.4 & 52.3 & 84.4 & 76.6 \\
 & 150 & 66.4 & 68.8 & 25.8 & 22.7 & 54.7 & 59.0 & 75.8 & 78.9 \\
\addlinespace[3pt]
TwoRoom & 25 & 77.3 & 74.2 & 30.5 & 34.8 & 99.2 & 99.2 & 89.1 & 85.9 \\
 & 50 & 36.7 & 35.5 & 5.5 & 6.6 & 94.5 & 94.1 & 72.7 & 73.8 \\
 & 100 & 17.2 & 16.8 & 1.6 & 1.6 & 53.1 & 55.1 & 46.9 & 46.9 \\
\bottomrule
\end{tabular}
\par\medskip\textit{Recorded-action selection}\par
\smallskip
\begin{tabular}{@{}lrrrrrrr@{}}
\toprule
Task & \multicolumn{1}{c}{$H$} & \multicolumn{2}{c}{Final goal} & \multicolumn{2}{c}{Learned} & \multicolumn{2}{c}{AP-rank} \\
\cmidrule(lr){3-4}\cmidrule(lr){5-6}\cmidrule(lr){7-8}
 &  & \multicolumn{1}{c}{Std.} & \multicolumn{1}{c}{Pert.} & \multicolumn{1}{c}{Std.} & \multicolumn{1}{c}{Pert.} & \multicolumn{1}{c}{Std.} & \multicolumn{1}{c@{}}{Pert.} \\
\midrule
Cube & 25 & 92.2 & 60.9 & 88.3 & 57.8 & 92.2 & 59.0 \\
 & 50 & 68.0 & 43.8 & 83.6 & 48.4 & 70.3 & 44.1 \\
 & 100 & 59.4 & 34.8 & 78.9 & 43.8 & 64.8 & 37.1 \\
 & 150 & 62.5 & 41.8 & 91.4 & 50.0 & 73.4 & 45.7 \\
\addlinespace[3pt]
PushT & 25 & 89.8 & 68.0 & 96.1 & 68.4 & 95.3 & 71.1 \\
 & 50 & 67.2 & 36.7 & 81.3 & 48.4 & 84.4 & 55.9 \\
 & 100 & 26.6 & 14.1 & 64.1 & 38.7 & 71.1 & 46.9 \\
 & 140 & 25.8 & 12.9 & 54.7 & 35.5 & 66.4 & 47.7 \\
\addlinespace[3pt]
Reacher & 25 & 92.2 & 93.0 & 87.5 & 88.3 & 91.4 & 91.8 \\
 & 50 & 97.7 & 97.7 & 94.5 & 93.8 & 97.7 & 96.5 \\
 & 100 & 95.3 & 94.9 & 97.7 & 96.9 & 97.7 & 96.1 \\
 & 150 & 90.6 & 87.5 & 95.3 & 96.9 & 96.9 & 95.7 \\
\addlinespace[3pt]
TwoRoom & 25 & 100.0 & 99.2 & 100.0 & 99.6 & 100.0 & 100.0 \\
 & 50 & 94.5 & 94.1 & 100.0 & 99.6 & 100.0 & 100.0 \\
 & 100 & 89.1 & 88.7 & 96.9 & 98.4 & 98.4 & 100.0 \\
\bottomrule
\end{tabular}
\endgroup
\end{table}

%% file: tables_compact/work.tex
\begin{table}[H]
\centering
\caption{\textbf{Mean episode prediction work at the original offsets.} Values are thousands of predicted five-action blocks per episode. Columns marked Std. show standard starts, and those marked Pert. show the average of each query's perturbed starts.}
\label{tab:work-summary}
\begingroup\small
\setlength{\tabcolsep}{3.5pt}
\begin{tabular}{@{}lrrrrrrrr@{}}
\toprule
Controller & \multicolumn{2}{c}{Cube} & \multicolumn{2}{c}{PushT} & \multicolumn{2}{c}{Reacher} & \multicolumn{2}{c}{TwoRoom} \\
\cmidrule(lr){2-3}\cmidrule(lr){4-5}\cmidrule(lr){6-7}\cmidrule(lr){8-9}
 & \multicolumn{1}{c}{Std.} & \multicolumn{1}{c}{Pert.} & \multicolumn{1}{c}{Std.} & \multicolumn{1}{c}{Pert.} & \multicolumn{1}{c}{Std.} & \multicolumn{1}{c}{Pert.} & \multicolumn{1}{c}{Std.} & \multicolumn{1}{c@{}}{Pert.} \\
\midrule
LeWM & 235.20 & 241.88 & 514.69 & 505.37 & 255.59 & 244.51 & 320.98 & 321.50 \\
CEM (final goal) & 252.62 & 263.17 & 493.56 & 491.10 & 407.76 & 425.31 & 354.95 & 354.95 \\
CEM (learned) & 142.27 & 191.54 & 375.06 & 387.40 & 393.91 & 379.98 & 250.65 & 247.94 \\
AP-CEM & 170.26 & 206.84 & 326.11 & 338.63 & 323.58 & 313.17 & 262.61 & 265.07 \\
CEM (transported) & 193.40 & 223.26 & 347.84 & 393.80 & 306.56 & 310.50 & 289.19 & 291.37 \\
Direct & 0.00 & 0.00 & 0.00 & 0.00 & 0.00 & 0.00 & 0.00 & 0.00 \\
Final-goal rank & 0.12 & 0.16 & 0.38 & 0.42 & 0.20 & 0.20 & 0.15 & 0.16 \\
Learned-target rank & 0.08 & 0.15 & 0.33 & 0.37 & 0.24 & 0.23 & 0.14 & 0.14 \\
AP-rank & 0.10 & 0.15 & 0.30 & 0.34 & 0.24 & 0.24 & 0.14 & 0.14 \\
\bottomrule
\end{tabular}
\endgroup
\end{table}

%% file: sections/E_rq3.tex
\section{RQ3: Memory Targets and Predictive Selection}
\label{app:rq3}
\subsection{Target offsets, memory size, and retrieval keys}
The target-offset ablation uses $q=z_{s_1+\ell}$ with $\ell\in\{1,3,10\}$ while prediction and execution retain $L=5$. By default, we use $\ell=L$ and retrieve only sources with a valid successor at the chosen offset. Memory-size ablations use nested episode subsets of 10, 25, and 50 percent, sampled once with a fixed seed, and recompute retrieval standard deviations in each subset. Table~\ref{tab:ablation-full} gives task-level results for both action rules.

The retrieval-key ablations omit either the distant endpoint or the displacement component. The fixed-span ablation uses $h=H$ throughout execution. Other retrieval and control settings remain unchanged, and Table~\ref{tab:key-full} reports the results.

\input{tables_compact/ablation.tex}
\input{tables_compact/keys.tex}

\subsection{Absolute endpoints and transported displacements}
\label{app:anchor-transport}
The anchoring comparison uses observation-only CEM on the main queries. For a live encoding $z_t$, retrieved source encoding $z_s$, and successor encoding $z_{s+L}$, the two target rules are
\begin{equation}
q_{\rm obs}=z_{s+L},\qquad
q_{\rm transport}=z_t+z_{s+L}-z_s,\qquad L=5.
\label{eq:transported-target}
\end{equation}
The live, source, and successor images use the same frozen encoder. Both rules construct targets from ordered observations and use a common Gaussian prior to generate actions. Their difference is $q_{\rm obs}-q_{\rm transport}=z_s-z_t$: one pursues the recorded endpoint, while the other matches the recorded displacement from the live encoding. The targets coincide when the source and live encodings coincide.

\paragraph{How the target changes action ordering.}
Suppose the current encoding has already moved partway along the recorded displacement. Anchoring requests only the remaining movement to the recorded endpoint. Transport repeats the full displacement from the new state, placing its target beyond that endpoint.

Let $e=z_t-z_s$ denote the current offset from the recorded source. Applying Equation~\eqref{eq:target-order} to the two targets gives
\begin{equation}
\Delta_{q_{\rm obs}}(u,v)-\Delta_{q_{\rm transport}}(u,v)
=2(p_u-p_v)^\top e.
\label{eq:anchor-order}
\end{equation}
When the live state moves away from the recorded source, Equation~\eqref{eq:anchor-order} gives the resulting change in the cost difference between the two actions. Its sign determines which action gains relative to the other. Perturbing the start changes the live state on which both rules act. Within each matched start, the target variants share their initial retrieved record.

\begin{table}[!htbp]
\centering
\caption{\textbf{Anchoring and displacement transport on the main queries.} Under the same CEM search, we compare success (\%) when using the observed endpoint or transported motion. At perturbed starts, we average the two starts of each query, and the mean weights all tasks equally.}
\label{tab:v2-transport}
\input{tables_v2/transport.tex}
\end{table}

\subsection{Exact endpoints and the first action block}
\label{app:v2-exact-endpoints}

At each assigned start, the simulator executes the eight retrieved action
blocks from the same physical state. The frozen encoder maps their
resulting observations to exact candidate endpoints.
For endpoint error and selection regret, we include starts for which all eight
candidates execute five actions. To obtain endpoint error, we first average over candidates.
At perturbed starts, we average the eligible starts of each query and then
give queries equal weight.

The first-block intervention compares Direct, the frozen predictor, simulator
endpoints, and transported displacement estimates. We score predictor and simulator
endpoints against each of the final, observed, and learned targets, whereas
displacement uses the observed target.
After the selected first block, every rollout uses Direct.
When computing success rates, we include all 128 assigned queries per task,
including starts resolved before a policy decision.

Table~\ref{tab:confirmation-diagnostics} reports endpoint error and regret for standard starts and the perturbation average. Table~\ref{tab:v2-first-block} gives every first-block selector with its common continuation.

\input{tables_compact/first_block.tex}

%% file: tables_compact/ablation.tex
\begin{table}[!htbp]
\centering
\caption{\textbf{Target offsets and memory size.} On the main queries, we measure success (\%) while varying the target offset or memory size. By default, the target is the five-step successor and retrieval uses the full memory. Prediction and execution retain five-action blocks throughout.}
\label{tab:ablation-full}
\begingroup\small
\setlength{\tabcolsep}{4pt}
\begin{tabular}{@{}lrrrrrrrr@{}}
\toprule
Setting & \multicolumn{2}{c}{Cube} & \multicolumn{2}{c}{PushT} & \multicolumn{2}{c}{Reacher} & \multicolumn{2}{c}{TwoRoom} \\
\cmidrule(lr){2-3}\cmidrule(lr){4-5}\cmidrule(lr){6-7}\cmidrule(lr){8-9}
 & \multicolumn{1}{c}{Std.} & \multicolumn{1}{c}{Pert.} & \multicolumn{1}{c}{Std.} & \multicolumn{1}{c}{Pert.} & \multicolumn{1}{c}{Std.} & \multicolumn{1}{c}{Pert.} & \multicolumn{1}{c}{Std.} & \multicolumn{1}{c@{}}{Pert.} \\
\midrule
\multicolumn{9}{l}{\textit{AP-CEM}} \\
Default & 46.9 & 29.7 & 69.5 & 64.8 & 75.8 & 78.9 & 46.9 & 46.9 \\
$\ell=1$ & 43.8 & 31.3 & 34.4 & 29.3 & 50.0 & 44.9 & 43.8 & 46.1 \\
$\ell=3$ & 61.7 & 42.6 & 68.8 & 60.9 & 91.4 & 90.6 & 57.8 & 56.3 \\
$\ell=10$ & 19.5 & 15.6 & 58.6 & 57.4 & 87.5 & 89.8 & 50.8 & 48.0 \\
10\% memory & 39.1 & 25.4 & 59.4 & 58.6 & 75.8 & 76.2 & 52.3 & 48.4 \\
25\% memory & 39.1 & 27.7 & 64.8 & 63.3 & 79.7 & 76.2 & 50.0 & 46.9 \\
50\% memory & 43.8 & 30.1 & 67.2 & 61.3 & 76.6 & 76.2 & 46.9 & 47.7 \\
\addlinespace[5pt]
\multicolumn{9}{l}{\textit{AP-rank}} \\
Default & 73.4 & 45.7 & 66.4 & 47.7 & 96.9 & 95.7 & 98.4 & 100.0 \\
$\ell=1$ & 71.9 & 44.5 & 30.5 & 21.9 & 93.8 & 94.9 & 82.0 & 84.0 \\
$\ell=3$ & 71.1 & 44.9 & 49.2 & 35.5 & 95.3 & 96.1 & 98.4 & 97.3 \\
$\ell=10$ & 71.1 & 44.1 & 59.4 & 34.4 & 96.9 & 95.7 & 99.2 & 99.2 \\
10\% memory & 63.3 & 42.2 & 53.1 & 39.5 & 94.5 & 94.5 & 99.2 & 99.6 \\
25\% memory & 67.2 & 44.1 & 58.6 & 43.0 & 96.1 & 95.7 & 98.4 & 98.8 \\
50\% memory & 73.4 & 45.3 & 61.7 & 43.4 & 93.8 & 95.7 & 99.2 & 99.6 \\
\bottomrule
\end{tabular}
\endgroup
\end{table}

%% file: tables_compact/keys.tex
\begin{table}[!htbp]
\centering
\caption{\textbf{Retrieval keys and temporal span.} For AP-rank, we compare success (\%) on the main queries. When omitting a key component, we keep the other components unchanged. Fixing $h=H$ replaces the adaptive span $\max(5,H-t)$.}
\label{tab:key-full}
\begingroup\small
\setlength{\tabcolsep}{3.5pt}
\begin{tabular}{@{}lrrrrrrrr@{}}
\toprule
Setting & \multicolumn{2}{c}{Cube} & \multicolumn{2}{c}{PushT} & \multicolumn{2}{c}{Reacher} & \multicolumn{2}{c}{TwoRoom} \\
\cmidrule(lr){2-3}\cmidrule(lr){4-5}\cmidrule(lr){6-7}\cmidrule(lr){8-9}
 & \multicolumn{1}{c}{Std.} & \multicolumn{1}{c}{Pert.} & \multicolumn{1}{c}{Std.} & \multicolumn{1}{c}{Pert.} & \multicolumn{1}{c}{Std.} & \multicolumn{1}{c}{Pert.} & \multicolumn{1}{c}{Std.} & \multicolumn{1}{c@{}}{Pert.} \\
\midrule
Full key & 73.4 & 45.7 & 66.4 & 47.7 & 96.9 & 95.7 & 98.4 & 100.0 \\
Without distant endpoint & 81.3 & 46.5 & 67.2 & 50.4 & 94.5 & 94.5 & 100.0 & 100.0 \\
Without displacement & 73.4 & 45.7 & 63.3 & 44.9 & 96.1 & 97.7 & 99.2 & 100.0 \\
Fixed span $h=H$ & 4.7 & 3.1 & 0.0 & 0.0 & 12.5 & 19.1 & 21.1 & 23.0 \\
\bottomrule
\end{tabular}
\endgroup
\end{table}

%% file: tables_v2/transport.tex
\begingroup
\small
\setlength{\tabcolsep}{8pt}
\begin{tabular}{@{}llrr@{}}
\toprule
Task & Start & \multicolumn{1}{c}{Observed target} & \multicolumn{1}{c@{}}{Transported target} \\
\midrule
Cube & Standard & \textbf{46.9} & 37.5 \\
 & Perturbed & \textbf{29.7} & 23.8 \\
\addlinespace[4pt]
PushT & Standard & \textbf{69.5} & 61.7 \\
 & Perturbed & \textbf{64.8} & 44.5 \\
\addlinespace[4pt]
Reacher & Standard & 75.8 & \textbf{81.3} \\
 & Perturbed & 78.9 & \textbf{80.5} \\
\addlinespace[4pt]
TwoRoom & Standard & \textbf{46.9} & 34.4 \\
 & Perturbed & \textbf{46.9} & 35.2 \\
\midrule
Mean & Standard & \textbf{59.8} & 53.7 \\
 & Perturbed & \textbf{55.1} & 46.0 \\
\bottomrule
\end{tabular}
\endgroup

%% file: tables_compact/first_block.tex
\begin{table}[H]
\centering
\caption{\textbf{Changing only the first action block.} After selecting the first block, we use the same Direct continuation to measure success (\%). Predictor and simulator selectors score the same eight candidates against the indicated target. All 128 assigned queries per task are included.}
\label{tab:v2-first-block}
\begingroup\small
\setlength{\tabcolsep}{3pt}
\begin{tabular}{@{}lrrrrrrrr@{}}
\toprule
First-block selector & \multicolumn{2}{c}{Cube} & \multicolumn{2}{c}{PushT} & \multicolumn{2}{c}{Reacher} & \multicolumn{2}{c}{TwoRoom} \\
\cmidrule(lr){2-3}\cmidrule(lr){4-5}\cmidrule(lr){6-7}\cmidrule(lr){8-9}
 & \multicolumn{1}{c}{Std.} & \multicolumn{1}{c}{Pert.} & \multicolumn{1}{c}{Std.} & \multicolumn{1}{c}{Pert.} & \multicolumn{1}{c}{Std.} & \multicolumn{1}{c}{Pert.} & \multicolumn{1}{c}{Std.} & \multicolumn{1}{c@{}}{Pert.} \\
\midrule
Direct & 82.0 & 47.7 & 51.6 & 18.0 & 99.2 & 96.9 & 99.2 & 100.0 \\
Predictor / Final & 67.2 & 43.0 & 39.1 & 18.4 & 96.1 & 97.3 & 100.0 & 100.0 \\
Simulator / Final & 73.4 & 44.5 & 43.0 & 19.1 & 96.1 & 97.3 & 99.2 & 99.6 \\
Predictor / Learned & 85.9 & 48.8 & 45.3 & 18.8 & 96.9 & 97.7 & 100.0 & 99.6 \\
Simulator / Learned & 86.7 & 49.2 & 42.2 & 22.3 & 96.9 & 97.7 & 100.0 & 99.6 \\
Predictor / Observed & 78.9 & 47.3 & 53.1 & 22.3 & 95.3 & 96.1 & 99.2 & 99.2 \\
Simulator / Observed & 80.5 & 48.8 & 53.9 & 23.8 & 96.9 & 96.1 & 100.0 & 99.2 \\
Displacement / Observed & 78.1 & 48.4 & 50.8 & 23.0 & 98.4 & 95.7 & 99.2 & 99.6 \\
\bottomrule
\end{tabular}
\endgroup
\end{table}

%% file: sections/F_independent.tex
\section{Independent-Query Target Replication}
\label{app:independent}
The supporting study uses 128 validation episodes per task, disjoint from the main queries, with the same task-specific goal offsets, action allowances, frozen models, and physical success criteria. It passes de-normalized actions directly to the environment, without the clipping to action bounds used in the main study. Each target contrast shares this query set and action mapping.

For CEM, we compare final-goal and observed targets at standard starts, using 30 iterations, 300 candidates, 30 elites, and initial standard deviation $1/3$. For ranking, we compare the same targets on eight retrieved action blocks at the standard start and two preassigned five-action perturbations. The goal and memory remain fixed across starts, and we include all assigned outcomes when computing success rates. Table~\ref{tab:independent-targets} collects the comparisons.

\input{tables_compact/replication.tex}

Observed targets improve success over final-goal scoring in every task under both action rules. Table~\ref{tab:main-results} reports the corresponding main-query target comparison.

%% file: tables_compact/replication.tex
\begin{table}[!htbp]
\centering
\caption{\textbf{Target effects on independent supporting queries.} For each action rule, we compare success (\%) with final-goal and observed targets. Ranking is evaluated at standard and perturbed starts, whereas CEM is evaluated at standard starts. Within each query, we average the two assigned perturbed starts.}
\label{tab:independent-targets}
\begingroup\small
\setlength{\tabcolsep}{5pt}
\begin{tabular}{@{}lrrrrrr@{}}
\toprule
Task & \multicolumn{2}{c}{CEM: Standard} & \multicolumn{2}{c}{Ranking: Standard} & \multicolumn{2}{c}{Ranking: Perturbed} \\
\cmidrule(lr){2-3}\cmidrule(lr){4-5}\cmidrule(lr){6-7}
 & \multicolumn{1}{c}{Final} & \multicolumn{1}{c}{Observed} & \multicolumn{1}{c}{Final} & \multicolumn{1}{c}{Observed} & \multicolumn{1}{c}{Final} & \multicolumn{1}{c@{}}{Observed} \\
\midrule
Cube & 8.6 & 39.1 & 64.8 & 71.1 & 41.4 & 43.4 \\
PushT & 1.6 & 81.3 & 25.8 & 78.1 & 9.0 & 49.2 \\
Reacher & 25.0 & 75.8 & 92.2 & 95.3 & 88.7 & 95.7 \\
TwoRoom & 0.8 & 39.1 & 85.2 & 100.0 & 88.3 & 99.6 \\
\bottomrule
\end{tabular}
\endgroup
\end{table}